\titleformat*{\section}{\large\bfseries}
\newtheorem{theorem}{Theorem}
\newtheorem{lemma}{Lemma}
\newtheorem{proposition}{Proposition}
\newtheorem{definition}{Definition}
\newtheorem{corollary}{Corollary}
\newtheorem{assumption}{Assumption}
\newcommand{\defeq}{\overset{\mathrm{def}}{=}}
\newcommand{\disteq}{\overset{\mathrm{d}}{=}}
\def\pd<#1>{\left\langle #1 \right\rangle}
\def\floor[#1]{\left\lfloor #1 \right\rfloor}
\def\ceil[#1]{\left\lceil #1 \right\rceil}
\def\pow[#1,#2]{#1^{(#2)}}
\def\tensor[#1,#2]{#1^{\otimes #2}}
\newcommand{\rd}{\mathrm{d}}
\newcommand{\bE}{\mathbb{E}}
\newcommand{\bR}{\mathbb{R}}
\newcommand{\cL}{\mathcal{L}}
\newcommand{\cN}{\mathcal{N}}
\newcommand{\cP}{\mathcal{P}}
\newcommand{\cY}{\mathcal{Y}}
\newcommand{\cZ}{\mathcal{Z}}
\newcommand{\vx}{\mathbf{x}}
\newcommand{\vy}{\mathbf{y}}
\newcommand{\vW}{\mathbf{W}}
\newcommand{\vX}{\mathbf{X}}
\newcommand{\vY}{\mathbf{Y}}
\newcommand{\KL}{\mathrm{KL}}
\newcommand{\TV}{\mathrm{TV}}
\newcommand{\Ent}{\mathrm{Ent}}
\title{\Large Improved Particle Approximation Error for Mean Field Neural Networks}
\author{Atsushi Nitanda
\vspace{2mm}\\
\normalsize{\textit{CFAR and IHPC, Agency for Science, Technology and Research (A$\star$STAR), Singapore}}  \\
\normalsize{\textit{College of Computing and Data Science, Nanyang Technological University, Singapore}} \\
\small{Email: atsushi\_nitanda@cfar.a-star.edu.sg}}
\date{}
\begin{document}
\maketitle

\begin{abstract}
Mean-field Langevin dynamics (MFLD) minimizes an entropy-regularized nonlinear convex functional defined over the space of probability distributions. MFLD has gained attention due to its connection with noisy gradient descent for mean-field two-layer neural networks. Unlike standard Langevin dynamics, the nonlinearity of the objective functional induces particle interactions, necessitating multiple particles to approximate the dynamics in a finite-particle setting. Recent works \citep{chen2022uniform,suzuki2023convergence} have demonstrated the {\it uniform-in-time propagation of chaos} for MFLD, showing that the gap between the particle system and its mean-field limit uniformly shrinks over time as the number of particles increases. In this work, we improve the dependence on logarithmic Sobolev inequality (LSI) constants in their particle approximation errors which can exponentially deteriorate with the regularization coefficient. Specifically, we establish an LSI-constant-free particle approximation error concerning the objective gap by leveraging the problem structure in risk minimization. As the application, we demonstrate improved convergence of MFLD, sampling guarantee for the mean-field stationary distribution, and uniform-in-time Wasserstein propagation of chaos in terms of particle complexity.
\end{abstract}
\section{Introduction}\label{sec:introduction}
In this work, we consider the following entropy-regularized mean-field optimization problem:
\begin{equation}\label{eq:objective_intro}
    \cL( \mu ) = F(\mu) + \lambda \Ent(\mu),
\end{equation}
where $F: \cP_2(\bR^d) \rightarrow \bR$ is a convex functional on the space of probability distributions $\cP_2(\bR^d)$ and $\Ent(\mu) = \int  \mu(\rd x)\log \frac{\rd \mu}{\rd x}(x)$ is a negative entropy. Especially we focus on the learning problem of mean-field neural networks, that is, $F(\mu)$ is a risk of (infinitely wide) two-layer neural networks $\bE_{X\sim \mu}[ h(X,\cdot)]$ where $h(X,\cdot)$ represents a single neuron with parameter $X$.
One advantage of this problem is that the convexity of $F$ with respect to $\mu$ can be leveraged to analyze gradient-based methods for a finite-size two-layer neural network: $\frac{1}{N}\sum_{i=1}^N h(x^i,\cdot)$ ($x^i \in \bR^d$). This is achieved by translating the optimization dynamics of the finite-dimensional parameters $(x^1,\ldots,x^N) \in \bR^{dN}$ into the dynamics of $\mu$ via the mean-field limit: $\frac{1}{N}\sum_{i=1}^N \delta_{x^i} \rightarrow \mu~(N\rightarrow \infty)$.
This connection was pointed out by \cite{nitanda2017stochastic,mei2018mean,chizat2018global,rotskoff2022trainability,sirignano2020mean,sirignano2020meanb} in the case of $\lambda=0$, and used for showing the global convergence of the gradient flow for \eqref{eq:objective_intro} by \cite{mei2018mean,chizat2018global}.

One may consider adding Gaussian noise to the gradient descent to make the method more stable. Then, we arrive at the following {\it mean-field Langevin dynamics} (MFLD) \citep{hu2019mean,mei2018mean} as a continuous-time representation under $N=\infty$ of this noisy gradient descent. 
\begin{equation}\label{eq:mfld_intro}
    \rd X_t = - \nabla \frac{\delta F(\mu_t)}{\delta \mu}(X_t)\rd t + \sqrt{2\lambda}\rd W_t,~~~\mu_t = \mathrm{Law}(X_t),
\end{equation}
where $\{W_t\}_{t\geq 0}$ is the $d$-dimensional standard Brownian motion and $\nabla \frac{\delta F(\mu)}{\delta \mu}$ is the Wasserstein gradient that is the gradient of the first-variation $\frac{\delta F(\mu)}{\delta \mu}$ of $F$.
Even though several optimization methods \citep{nitanda2020particle,oko2022psdca,chen2023entropic} that can efficiently solve the above problem with polynomial computational complexity have been proposed, MFLD remains an interesting research subject because of the above connection to the noisy gradient descent.
In fact, recent studies showed that MFLD globally converges to the optimal solution \citep{hu2019mean,mei2018mean} thanks to noise perturbation and that its convergence rate is exponential in continuous-time under {\it uniform log-Sobolev inequality} \citep{nitanda2022convex,chizat2022mean}.

However, despite such remarkable progress, the above studies basically assume the mean-field limit: $N=\infty$. To analyze an implementable MFLD, we have to deal with  discrete-time and finite-particle dynamics, i.e., noisy gradient descent: 
\begin{equation}\label{eq:discrete_mfld_intro}
    X_{k+1}^i = X_k^i - \eta \nabla \frac{\delta F(\mu_{\vX_k})}{\delta \mu}(X_k^i) + \sqrt{2\lambda \eta} \xi_k^i,~~~(i \in \{1,\ldots,N\}),
\end{equation}
where $\xi_k^i \sim \cN(0,I_d)~(i\in \{1,\ldots,d\})$ are i.i.d. standard normal random variables and $\mu_{\vX_k} = \frac{1}{N}\sum_{i=1}^N \delta_{X_k^i}$ is an empirical measure.
On the one hand, the convergence in the discrete-time setting has been proved by \cite{nitanda2022convex} using the one-step interpolation argument for Langevin dynamcis \citep{vempala2019rapid}. On the other hand, approximation error induced by using finite-particle system $\vX_k=(X_k^1,\ldots,X_k^N)$ has been studied in the literature of {\it propagation of chaos} \citep{sznitman1991topics}. 
As for MFLD, \cite{mei2018mean} suggested exponential blow-up of particle approximation error in time, but recent works \citep{chen2022uniform,suzuki2023uniformintime} proved {\it uniform-in-time propagation of chaos}, saying that the gap between $N$-particle system and its mean-field limit shrinks uniformly in time as $N\rightarrow \infty$. Afterward, \cite{suzuki2023convergence} established truly quantitative convergence guarantees for \eqref{eq:discrete_mfld_intro} by integrating the techniques developed in \cite{nitanda2022convex,chen2022uniform}. Furthermore, \cite{kook2024sampling} proved the sampling guarantee for the mean-field stationary distribution: $\mu_* = \arg\min_{\cP_2(\bR^d)}\cL(\mu)$, building upon the uniform-in-time propagation of chaos.

\subsection{Contributions}
In this work, we further improve the particle approximation error \citep{chen2022uniform,suzuki2023convergence} by alleviating the dependence on logarithmic Sobolev inequality (LSI) constants in their bounds. This improvement could exponentially reduce the required number of particles because LSI constant $\alpha$ could exponentially deteriorate with the regularization coefficient, i.e., $\alpha \gtrsim \exp(-\Theta(1/\lambda))$. Specifically, we establish an LSI-constant-free particle approximation error concerning the objective gap by leveraging the problem structure in risk minimization. Additionally, as the application, we demonstrate improved (i) convergence of MFLD, (ii) sampling guarantee for the mean-field stationary distribution $\mu_*$, and (iii) uniform-in-time Wasserstein propagation of chaos in terms of particle complexity. We summarize our contributions below.
\begin{itemize}[itemsep=0mm,leftmargin=7mm,topsep=0mm]
    \item We demonstrate the particle approximation error $O(\frac{1}{N})$ (Theorem \ref{theorem:mf_poc}) regarding the objective gap. A significant difference from the existing approximation error $O(\frac{\lambda}{\alpha N})$ \citep{chen2022uniform,suzuki2023convergence} is that our bound is free from the LSI-constant. Therefore, the approximation error uniformly decreases as $N \rightarrow \infty$ regardless of the value of LSI-constant as well as $\lambda$.
    \item As applications of Theorem \ref{theorem:mf_poc}, we derive the convergence rates of the finite-particle MFLDs (Theorem \ref{theorem:discrete_mfld}), sampling guarantee for $\mu_*$ (Corollary \ref{corollary:sampling_mfld}), and uniform-in-time Wasserstein propagation of chaos (Corollary \ref{corollary:mf_wasserstein_poc}) with the approximation errors inherited from Theorem \ref{theorem:mf_poc}, which improve upon existing errors \citep{chen2022uniform,suzuki2023convergence,kook2024sampling}.
\end{itemize}
Here, we mention the proof strategy of Theorem \ref{theorem:mf_poc}. Langevin dynamics (LD) is a special case of MFLD corresponding to the case where $F$ is a linear functional. It is well known that even with a single particle, we can simulate LD and the particle converges to the target Gibbs distribution. This means that the particle approximation error is due to the non-linearity of $F$. Therefore, in our analysis, we carefully treat the non-linearity of $F$ and obtain an expression for the particle approximation error using the Bregman divergence induced by $F$. Finally, we relate this divergence to the variance of an $N$-particle neural network and show the error of $O(1/N)$.
This proof strategy is quite different from existing ones and is simple. Moreover, it leads to an improved approximation error as mentioned above. We refer the readers to Section \ref{sec:proof} for details about the proof.

\subsection{Notations}
We denote vectors and random variables on $\bR^d$ by lowercase and uppercase letters such as $x$ and $X$, respectively, and boldface is used for $N$-pairs of them like $\vx = (x^1,\ldots,x^N) \in \bR^{Nd}$ and $\vX=(X^1,\ldots,X^N)$.
$\|\cdot\|_2$ denotes the Euclidean norm. Let $\cP_2(\bR^d)$ be the set of probability distributions with finite second moment on $\bR^d$.
For probability distributions $\mu, \nu \in \cP_2(\bR^d)$, we define 
Kullback-Leibler (KL) divergence (a.k.a. relative entropy) by $\KL(\mu\|\nu) \defeq \int \rd\mu(x) \log \frac{\rd \mu}{\rd \nu}(x)$. 
$\Ent$ denotes the negative entropy: $\Ent(\mu) = \int  \mu(\rd x)\log \frac{\rd \mu}{\rd x}(x)$. 
We denote by $\frac{\rd \mu}{\rd x}$ the density function of $\mu$ with respect to the Lebesgue measure if it exists. We denote $\pd< f, m>= \int f(x) m(\rd x)$ for a (singed) measure $m$ and integrable function $f$ on $\bR^d$. 
Given $\vx=(x^1,\ldots,x^N) \in \bR^{Nd}$, we write an empirical measure supported on $\vx$ as $\mu_\vx = \frac{1}{N}\sum_{i=1}^N \delta_{x^i}$.


\section{Preliminaries}
In this section, we explain a problem setting and give a brief overview of the recent progress of the mean-field Langevin dynamics.
\subsection{Problem setting}
We say the functional $G:\cP_2(\bR^d) \to \bR$ is differentiable when there exists a functional (referred to as a {\it first variation}): $\frac{\delta G}{\delta \mu}:~\cP_2(\bR^d) \times \bR^d \ni (\mu,x) \mapsto \frac{\delta G(\mu)}{\delta \mu}(x) \in \bR$ such that for $\forall \mu, \mu' \in \cP_2(\bR^d)$,
\[ \left.\frac{\rd G (\mu+\epsilon (\mu'-\mu))}{\rd\epsilon} \right|_{\epsilon=0} 
= \int  \frac{\delta G(\mu)}{\delta \mu}(x) (\mu'-\mu)(\rd x), \]
and say $G$ is convex when for $\forall \mu, \mu' \in \cP_2(\bR^d)$,
\begin{equation}\label{eq:convexity}
G(\mu') \geq G(\mu) + \int \frac{\delta G(\mu)}{\delta \mu}(x)  (\mu'-\mu)(\rd x).      
\end{equation}

For a differentiable and convex functional $F_0 :\cP_2(\bR^d) \to \bR$ and coefficients $\lambda,~\lambda'>0$ we consider the minimization of an entropy-regularized convex functional \citep{mei2018mean,hu2019mean,nitanda2022convex,chizat2022mean,chen2022uniform,suzuki2023convergence,kook2024sampling}:
\begin{equation}\label{prob:org}
    \min_{\mu \in \cP_2(\bR^d)} 
    \left\{ 
    \cL(\mu) = F_0(\mu) + \lambda'\bE_{X \sim \mu}[ \|X\|_2^2] + \lambda \Ent(\mu)
    \right\}.
\end{equation}
We set $F(\mu) = F_0(\mu) + \lambda'\bE_{\mu}[\|X\|_2^2]$. Note both $F$ and $\cL$ are differentiable convex functionals.
In particular, we focus on the empirical risk $F_0$ of the mean-field neural networks, i.e., two-layer neural networks in the mean-field regime. The definition of this model is given in Section \ref{sec:main_results}.
Throughout the paper, we assume the existence of the solution $\mu_* \in \cP_2(\bR^d)$ of the problem \eqref{prob:org} and make the following regularity assumption on the objective function, which is inherited from \cite{chizat2022mean,nitanda2022convex,chen2023entropic}.
\begin{assumption}\label{assumption:regularity}
There exists $M_1, M_2>0$ such that for any $\mu \in \cP_2(\bR^d)$, $x \in \bR^d$, $\left| \nabla \frac{\delta F_0(\mu)}{\delta \mu}(x) \right| \leq M_1$ and for any $\mu, \mu' \in \cP_2(\bR^d)$, $x, x' \in \bR^d$,    
    \[ \left\| \nabla \frac{\delta F_0(\mu)}{\delta \mu}(x) - \nabla \frac{\delta F_0(\mu')}{\delta \mu}(x') \right\|_2 \leq M_2 \left( W_2(\mu,\mu') + \| x - x'\|_2 \right). \]
\end{assumption}
Then, under Assumption \ref{assumption:regularity}, $\mu_*$ uniquely exists and satisfies the optimality condition: $\mu_* \propto \exp\left( -\frac{1}{\lambda} \frac{\delta F(\mu_*)}{\delta \mu}\right)$. We refer the readers to \cite{chizat2022mean,hu2019mean,mei2018mean} for details.

We introduce the {\it proximal Gibbs distribution} \citep{nitanda2022convex,chizat2022mean}, which plays a key role in showing the convergence of mean-field optimization methods \citep{nitanda2022convex,chizat2022mean,oko2022psdca,chen2023entropic}.
\begin{definition}[Proximal Gibbs distribution]
    For $\mu \in \cP_2(\bR^d)$, the {\it proximal Gibbs distribution} $\hat{\mu}$ associated with $\mu$ is defined as follows: 
    \begin{equation}\label{eq:proximal-Gibbs}
        \frac{\rd\hat{\mu}}{\rd x}(x) = \frac{\exp\left( - \frac{1}{\lambda} \frac{\delta F(\mu)}{\delta \mu}(x) \right)}{Z(\mu)},
    \end{equation}
    where $Z(\mu)$ is the normalization constant and $\rd\hat{\mu} /\rd x$ is the density function w.r.t.~Lebesgue measure.
\end{definition}
We remark that $\hat{\mu}$ exists, that is $Z(\mu) < \infty$, for any $\mu \in \cP_2(\bR^d)$ because of the boundedness of $\delta F_0/\delta \mu$ in Assumption \ref{assumption:regularity} and that the optimality condition for the problem \eqref{prob:org} can be simply written using $\hat{\mu}$ as follows: $\mu_* = \hat{\mu}_*$.
Since the proximal Gibbs distribution $\hat{\mu}$ minimizes the linear approximation of $F$ at $\mu$: $F(\mu) + \int \frac{\delta F}{\delta \mu}(\mu)(x)(\mu'-\mu)(\rd x) + \lambda \Ent(\mu')$ with respect to $\mu'$, $\hat{\mu}$ can be regarded as a surrogate of the solution $\mu_*$. 
In the case where $F_0(\mu)$ is a linear functional: $F_0(\mu) = \bE_\mu[f]$ ($\exists f:\bR^d \rightarrow \bR$), the proximal Gibbs distribution $\hat{\mu}$ coincides with $\mu_*$.

\subsection{Mean-field Langevin dynamics and finite-particle approximation}
The mean field Langevin dynamics (MFLD) is one effective method for solving the problem \eqref{prob:org}. MFLD $\{X_t\}_{t\geq 0}$ is described by the following stochastic differential equation:
\begin{equation}\label{eq:mfld}
    \rd X_t = - \nabla \frac{\delta F}{\delta \mu}(\mu_t)(X_t)\rd t + \sqrt{2\lambda}\rd W_t,~~~\mu_t = \mathrm{Law}(X_t),
\end{equation}
where $\{W_t\}_{t\geq 0}$ is the $d$-dimensional standard Brownian motion with $W_0=0$. We refer the reader to \cite{huang2021distribution} for the existence of the unique solution of this equation under Assumption \ref{assumption:regularity}. \cite{nitanda2022convex,chizat2022mean} showed the convergence of MFLD: $\cL(\mu_t)-\cL(\mu_*) \leq \exp(-2\alpha\lambda t)(\cL(\mu_0)-\cL(\mu_*))$ under the {\it uniform log-Sobolev inequality (LSI)}: 
\begin{assumption}\label{assumption:uniform_ls_ineq}
    There exists a constant $\alpha>0$ such that for any $\mu \in \cP_2(\bR^d)$, proximal Gibbs distribution $\hat{\mu}$ satisfies log-Sobolev inequality with $\alpha$, that is, for any smooth function $g: \bR^d \to \bR$, 
    \[ \bE_{\hat{\mu}}[g^2 \log g^2] - \bE_{\hat{\mu}}[g^2]\log\bE_{\hat{\mu}}[g^2] \leq \frac{2}{\alpha}\bE_{\hat{\mu}}[\|\nabla g\|_2^2]. \]
\end{assumption}

Because of the appearance of $\mu_t$ in the drift term, MFLD is a distribution-dependent dynamics referred to as general McKean–Vlasov \citep{mckean1966class}. This dependence makes the difference from the standard Langevin dynamics. Hence, we need multiple particles to approximately simulate MFLD \eqref{eq:mfld} unlike the standard Langevin dynamics.
We here introduce the finite-particle approximation of \eqref{eq:mfld} described by the $N$-tuple of stochastic differential equation $\{\vX_t\}_{t\geq 0} = \{(X_t^1,\ldots,X_t^N)\}_{t\geq 0}$:
\begin{equation}\label{eq:finite_particle_mfld}
    \rd X_t^i = - \nabla \frac{\delta F (\mu_{\vX_t})}{\delta \mu}(X_t^i)\rd t + \sqrt{2\lambda}\rd W_t^i,~~~(i \in \{1,\ldots,N\}),
\end{equation}
where $\mu_{\vX_t} = \frac{1}{N}\sum_{i=1}^N \delta_{X_t^i}$ is an empirical measure supported on $\vX_t$, $\{W_t^i\}_{t\geq 0},~(i\in \{1,\ldots,N\})$ are independent standard Brownian motions, and the gradient in the first term in RHS is taken for the function: $\frac{\delta F(\mu_{\vX_t})}{\delta \mu}(\cdot): \bR^d \rightarrow \bR$.
We often denote $F(\vx) = F(\mu_\vx)$ when emphasizing $F$ as a function of $\vx$. Noticing $N \nabla_{x^i} F(\vx) = \nabla \frac{\delta F (\mu_{\vx})}{\delta \mu}(x^i)$ \citep{chizat2022mean}, we can identify the dynamics \eqref{eq:finite_particle_mfld} as the Langevin dynamics $\rd \vX_t = - N \nabla F(\vX_t) \rd t + \sqrt{2\lambda}\rd \vW_t$, where $\{\vW_t\}_{t\geq 0}$ is the standard Brownian motion on $\bR^{dN}$, for sampling from the following Gibbs distribution $\pow[\mu,N]_*$ on $\bR^{dN}$ \citep{chen2022uniform}:
\begin{equation}\label{eq:LD_stationary_dist}
    \frac{\rd \pow[\mu,N]_*}{\rd\vx} (\vx) 
    \propto \exp\left( -\frac{N}{\lambda}F(\vx) \right) 
    = \exp\left( -\frac{N}{\lambda}F_0(\vx) - \frac{\lambda'}{\lambda} \|\vx\|_2^2\right).
\end{equation}
In other words, the dynamics \eqref{eq:finite_particle_mfld} minimizes the entropy-regularized linear functional: $\pow[\mu,N] \in \cP_2(\bR^{dN})$,
\begin{equation}\label{prob:finite_particle_opt}
    \pow[\cL,N]( \pow[\mu,N]) 
    = N \bE_{\vX \sim \pow[\mu,N]}[ F(\vX)] + \lambda \Ent(\pow[\mu,N]),
\end{equation}
and $\pow[\mu,N]_*$ is the minimizer of $\pow[\cL,N]$. 
Therefore, two objective functions $\cL$ and $\pow[\cL,N]$ are tied together through the two aspects of the dynamics \eqref{eq:finite_particle_mfld}; one is the finite-particle approximation of the MFLD \eqref{eq:mfld} for $\cL$ and the other is the optimization methods for $\pow[\cL,N]$. We then expect $\pow[\cL,N](\pow[\mu,N]_*)/N$ converges to $\cL(\mu_*)$ as $N\rightarrow \infty$.
Such finite-particle approximation error between $\pow[\cL,N](\pow[\mu,N]_*)/N$ and $\cL(\mu_*)$ has been studied in the literature of {\it propagation of chaos}. Especially, \cite{chen2022uniform} proved 
\begin{equation}\label{eq:prev_poc}
    \frac{\lambda}{N} \KL(  \pow[\mu,N]_* \| \mu_*^{\otimes N}) \leq \frac{1}{N}\pow[\cL,N]( \pow[\mu,N]_*) - \cL(\mu_*) \leq \frac{\lambda C}{\alpha N}
\end{equation}
where $C>0$ is some constant and $\mu_*^{\otimes N}$ is an $N$-product measure of $\mu_*$. \cite{suzuki2023convergence} further studied MFLD in finite-particle and discrete-time setting defined below: given $k$-th iteration $\vX_k = (X_k^1,\ldots,X_k^N)$, 
\begin{equation}\label{eq:discrete_mfld}
    X_{k+1}^i = X_k^i - \eta \nabla \frac{\delta F(\mu_{\vX_k})}{\delta \mu}(X_k^i) + \sqrt{2\lambda \eta} \xi_k^i,~~~(i \in \{1,\ldots,N\}),
\end{equation}
where $\xi_k^i \sim \cN(0,I_d)~(i\in \{1,\ldots,N\})$ are i.i.d. standard normal random variables. By extending the proof techniques developed by \cite{chen2022uniform}, \cite{suzuki2023convergence} proved the uniform-in-time propagation of chaos for MFLD \eqref{eq:discrete_mfld}; there exist constants $C_1, C_2 > 0$ such that
\begin{equation}\label{eq:discrete_mfld_poc}
    \frac{1}{N}\pow[\cL,N]( \pow[\mu,N]_k) - \cL(\mu_*)
    \leq \exp\left(-\lambda\alpha \eta k/2\right)\left(\frac{1}{N}\pow[\cL,N]( \pow[\mu,N]_0) - \cL(\mu_*)\right) + \frac{(\lambda \eta + \eta^2)C_1}{\lambda \alpha} + \frac{\lambda C_2}{\alpha N},  
\end{equation}
where $\pow[\mu,N]_k = \mathrm{Law}(\vX_k)$. The last two terms are due to time-discretization and finite-particle approximation, respectively.
The finite-particle approximation error $O(\frac{\lambda}{\alpha N})$ appearing in \eqref{eq:prev_poc}, \eqref{eq:discrete_mfld_poc} means the deterioration as $\alpha \rightarrow 0$. Considering typical estimation $\alpha \gtrsim \exp(-\Theta(1/\lambda))$ (e.g., Theorem 1 in \cite{suzuki2023convergence}) of LSI-constant using Holley and Stroock argument \citep{holley1987logarithmic} or Miclo's trick \citep{bardet2018}, these bounds imply that the required number of particles increases exponentially as $\lambda \rightarrow 0$. 

\section{Main results}\label{sec:main_results}
In this section, we present an LSI-constant free particle approximation error between $\frac{1}{N}\pow[\cL,N]( \pow[\mu,N]_*)$ and $\cL(\mu_*)$ for mean-field neural networks and apply it to the mean-field Langevin dynamics.

\subsection{LSI-constant free particle approximation error for mean-field neural networks}
We focus on the empirical risk minimization problem of mean-field neural networks. Let $h(x,\cdot): \cZ \rightarrow \bR$ be a function parameterized by $x \in \bR^d$, where $\cZ$ is the data space.
The mean-field model is obtained by integrating $h(x,\cdot)$ with respect to the probability distribution $\mu \in \cP_2(\bR^d)$ over the parameter space: $h_\mu(\cdot) = \bE_{X \sim \mu}[ h(X,\cdot)]$. Typically, $h$ is set as $h(x,z) = \sigma(w^\top z)$ or $h(x,z)=\tanh(v \sigma(w^\top z))$ where $\sigma$ is an activation function and $x=w$ or $x=(v,w)$ is the trainable parameter in each case.
Given training examples $\{(z_j,y_j)\}_{j=1}^n \subset \cZ \times \bR$ and loss function $\ell(\cdot,\cdot): \bR\times \bR \rightarrow \bR$, we consider the empirical risk of the mean-field neural networks:
\begin{equation}\label{eq:mean_field_opt}
    F_0(\mu) = \frac{1}{n}\sum_{j=1}^n \ell( h_\mu(z_j), y_j).
\end{equation}
For our analysis, we make the following assumption which is satisfied in the common settings. 
\begin{assumption}\label{assumption:lipschitz_smoothness_boundedness}
$\ell(\cdot,y)$ is convex and $L$-smooth, and $h(X,z)$ $(X\sim \mu_*)$ has a finite-second moment;
    \begin{itemize}[itemsep=0mm,leftmargin=5mm,topsep=0mm] 
        \item There exists $L > 0$ such that for any $a,b,y \in \cY$, $\ell(b,y) \leq \ell(a,y) + \frac{\partial \ell(a,y)}{\partial a}(b-a) + \frac{L}{2}|b-a|^2$.
        \item There exists $R > 0$ such that for any $z \in \cZ$, $\bE_{X \sim \mu_*}[|h(X,z)|^2] \leq R^2$.
    \end{itemize}
\end{assumption}
We can directly verify this assumption for mean-field neural networks using a bounded activation function \citep{nitanda2022convex,chizat2022mean,chen2022uniform,suzuki2023convergence} and standard loss functions such as logistic loss and squared loss. The following is the main theorem that bounds $\frac{1}{N}\pow[\cL,N]( \pow[\mu,N]_*) - \cL(\mu_*)$. The proof is deferred to Section \ref{sec:proof} and Appendix \ref{subsec:poc}.

\begin{theorem}\label{theorem:mf_poc}
Under Assumptions \ref{assumption:regularity} and \ref{assumption:lipschitz_smoothness_boundedness}, it follows that
\begin{equation}\label{eq:mf_poc}
    \frac{\lambda}{N} \KL(\pow[\mu,N]_* \| \mu^{\otimes N}_*)  
    \leq \frac{1}{N}\pow[\cL,N]( \pow[\mu,N]_*) - \cL(\mu_*) 
    \leq  \frac{LR^2}{2N}.
\end{equation}
\end{theorem}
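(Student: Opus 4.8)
The plan is to prove the two inequalities by separate arguments, both exploiting the explicit Gibbs forms $\pow[\mu,N]_* \propto \exp(-\frac{N}{\lambda}F(\vx))$ and $\mu_* \propto \exp(-\frac{1}{\lambda}\frac{\delta F(\mu_*)}{\delta \mu})$. The one structural fact I will lean on throughout is the Gibbs variational principle: since $\pow[\cL,N]$ is a linear-plus-entropy functional, for every $\pow[\mu,N]\in\cP_2(\bR^{dN})$ one has $\pow[\cL,N](\pow[\mu,N]) - \pow[\cL,N](\pow[\mu,N]_*) = \lambda\KL(\pow[\mu,N]\|\pow[\mu,N]_*)\geq 0$, so $\pow[\mu,N]_*$ minimizes $\pow[\cL,N]$ and any convenient test measure yields an upper bound.

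For the lower (left) inequality, I would substitute the two Gibbs densities into $\frac{1}{N}\pow[\cL,N](\pow[\mu,N]_*) - \cL(\mu_*)$ and into $\frac{\lambda}{N}\KL(\pow[\mu,N]_*\|\mu_*^{\otimes N})$ and subtract. Writing $\log\mu_*^{\otimes N}(\vx) = -\frac{1}{\lambda}\sum_i\frac{\delta F(\mu_*)}{\delta\mu}(x^i) - N\log Z(\mu_*)$ and using $\frac{1}{N}\sum_i\frac{\delta F(\mu_*)}{\delta\mu}(x^i) = \pd<\frac{\delta F(\mu_*)}{\delta\mu},\mu_{\vx}>$, the normalization constant $\log Z(\mu_*)$ and the particle-entropy term $\frac{\lambda}{N}\Ent(\pow[\mu,N]_*)$ cancel exactly, leaving the clean identity that the gap equals $\bE_{\vX\sim\pow[\mu,N]_*}[\,F(\mu_{\vX}) - F(\mu_*) - \pd<\frac{\delta F(\mu_*)}{\delta\mu},\mu_{\vX}-\mu_*>\,]$. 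The bracketed quantity is the Bregman divergence of $F$ at $\mu_*$, which is nonnegative by the convexity inequality \eqref{eq:convexity}; hence the left inequality, with the gap exactly decomposing into a KL term plus an expected Bregman divergence.

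For the upper (right) inequality, the key idea is to avoid the KL/LSI route entirely and instead use optimality of $\pow[\mu,N]_*$ against the product test measure $\mu_*^{\otimes N}$, i.e. $\pow[\cL,N](\pow[\mu,N]_*) \leq \pow[\cL,N](\mu_*^{\otimes N})$. Since $\Ent(\mu_*^{\otimes N}) = N\Ent(\mu_*)$ and the quadratic regularizer $\lambda'\bE_\mu[\|X\|_2^2]$ is linear in $\mu$, the entropy and quadratic contributions cancel upon subtracting $\cL(\mu_*)$, reducing everything to $\bE_{\vX\sim\mu_*^{\otimes N}}[F_0(\mu_{\vX})] - F_0(\mu_*) \leq \frac{LR^2}{2N}$. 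Here the risk structure is essential: with $\hat m_j \defeq \frac{1}{N}\sum_i h(X^i,z_j)$, an unbiased estimator of $h_{\mu_*}(z_j)$ under $\mu_*^{\otimes N}$, I would apply the $L$-smoothness of $\ell(\cdot,y_j)$ to bound $\ell(\hat m_j,y_j)$ by $\ell(h_{\mu_*}(z_j),y_j)$ plus a first-order term plus $\frac{L}{2}|\hat m_j - h_{\mu_*}(z_j)|^2$. Taking the expectation, the first-order term vanishes by unbiasedness, and the quadratic term equals $\frac{L}{2N}\mathrm{Var}_{\mu_*}(h(\cdot,z_j)) \leq \frac{LR^2}{2N}$ by the second-moment bound in Assumption \ref{assumption:lipschitz_smoothness_boundedness}; averaging over $j$ closes the bound.

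The main obstacle, and the crux of the whole result, is obtaining the upper bound without invoking log-Sobolev. The insight is that comparing the $N$-particle free energy directly against the product measure $\mu_*^{\otimes N}$ collapses the problem to controlling $\bE[F_0(\mu_{\vX})] - F_0(\mu_*)$, which is a variance of an empirical neural network, and it is precisely the convexity and smoothness of the loss together with unbiasedness of the empirical mean that kill the first-order fluctuation and leave only the $O(1/N)$ variance, with no LSI constant anywhere. The delicate bookkeeping is verifying that the entropy and quadratic-regularization terms cancel cleanly in both inequalities; once that is in place, convexity handles the lower bound and the variance computation handles the upper bound.
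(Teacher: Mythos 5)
Your proposal is correct and follows essentially the same route as the paper: the lower bound is the paper's identity \eqref{eq:objective_gap_opt} (the gap minus $\frac{\lambda}{N}\KL(\pow[\mu,N]_*\|\mu_*^{\otimes N})$ equals an expected Bregman divergence, nonnegative by convexity), and the upper bound is exactly the paper's comparison $\pow[\cL,N](\pow[\mu,N]_*)\leq\pow[\cL,N](\mu_*^{\otimes N})$ followed by the reduction to $\bE_{\mu_*^{\otimes N}}[F_0(\mu_\vX)]-F_0(\mu_*)$ and the smoothness-plus-variance bound $\frac{LR^2}{2N}$ (Theorem \ref{theorem:gen_poc} and the proof in Appendix \ref{subsec:poc}). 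The only cosmetic difference is that you substitute the Gibbs form of $\mu_*$ directly rather than routing through the proximal Gibbs distribution $\hat{\mu}$ of Proposition \ref{prop:objective_gap}, which coincides with $\mu_*$ at the optimum anyway.
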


A significant difference from the previous results \eqref{eq:prev_poc}, \eqref{eq:discrete_mfld_poc} with $k\to \infty$, and \cite{kook2024sampling} is that our bound is free from the LSI-constant. Therefore, the approximation error uniformly decreases as $N \rightarrow \infty$ at the same rate regardless of the value of LSI-constant as well as $\lambda$.

As discussed in Section \ref{sec:proof} later, the differences between $\frac{1}{N}\pow[\cL,N]( \pow[\mu,N]_*)$ and $\cL(\mu_*)$ is due to non-linearity of the loss $\ell$. In fact, since $L=0$ for a linear loss function $\ell$, it follows that $\frac{1}{N}\pow[\cL,N]( \pow[\mu,N]_*) = \cL(\mu_*)$.

\subsection{Application: mean-field Langevin dynamics in the finite-particle setting}
As an application of Theorem \ref{theorem:mf_poc}, we present the convergence analysis of the mean-field Langevin dynamics (MFLD) in the finite-particle settings \eqref{eq:finite_particle_mfld} and \eqref{eq:discrete_mfld}, sampling guarantee for the mean-field stationary distribution $\mu_* \in \cP_2(\bR^d)$, and uniform-in-time Wasserstein propagation of chaos.
\subsubsection{Convergence of the mean-field Langevin dynamics}
Our convergence theory assumes the logarithmic Sobolev inequality (LSI) on $\pow[\mu,N]_*$.
\begin{assumption}\label{assumption:ls_ineq}
    There exists a constant $\bar{\alpha}>0$ such that $\pow[\mu,N]_*$ satisfies log-Sobolev inequality with constant $\bar{\alpha}$, that is, for any smooth function $g: \bR^{dN} \to \bR$, it follows that 
    \[ \bE_{\pow[\mu,N]_*}[g^2 \log g^2] - \bE_{\pow[\mu,N]_*}[g^2]\log\bE_{\pow[\mu,N]_*}[g^2] \leq \frac{2}{\bar{\alpha}}\bE_{\pow[\mu,N]_*}[\|\nabla g\|_2^2]. \]
\end{assumption}
By setting $g = \sqrt{\frac{\rd \pow[\mu,N]}{\rd \pow[\mu,N]_*}}$, Assumption \ref{assumption:ls_ineq} leads to  $\KL( \pow[\mu,N] \| \pow[\mu,N]_*) \leq \frac{1}{2\bar{\alpha}} \bE_{\pow[\mu,N]}\left[ \left\| \nabla \log \frac{\rd \pow[\mu,N]}{\rd \pow[\mu,N]_*} \right\|_2^2 \right]$.
For instance, using Holley and Stroock argument \citep{holley1987logarithmic} under the boundedness assumption $|F_0(\vx)|\leq B~(\forall \vx \in \bR^{dN})$, we can verify LSI on $\pow[\mu,N]_*$ with a constant $\bar{\alpha}$ that satisfies: $\bar{\alpha} \geq \frac{2\lambda'}{\lambda}\exp\left( - \frac{4NB}{\lambda}\right)$. For the detail, see Appendix \ref{sec:auxiliary}.

The following theorem demonstrates the convergence rates of $\pow[\cL,N](\pow[\mu,N])$ with the finite-particle MFLD in the continuous- and discrete-time settings. The first assertion is a direct consequence of Theorem \ref{theorem:mf_poc} and the standard argument based on LSI for continuous-time Langevin dynamics. Whereas for the second assertion, we employ the one-step interpolation argument \citep{vempala2019rapid} with some refinement to avoid the dependence on the dimensionality $dN$ where the dynamics \eqref{eq:discrete_mfld} performs. The proof is given in Appendix \ref{subsec:mfld_conv}.
We denote $\pow[\mu,N]_t = \mathrm{Law}(\vX_t)$ and $\pow[\mu,N]_k = \mathrm{Law}(\vX_k)$ for continuous- and discrete-time dynamics \eqref{eq:finite_particle_mfld} and \eqref{eq:discrete_mfld}, respectively. 
\begin{theorem}\label{theorem:discrete_mfld}
Suppose Assumptions \ref{assumption:regularity}, \ref{assumption:lipschitz_smoothness_boundedness}, and \ref{assumption:ls_ineq} hold. Then, it follows that
\begin{enumerate}[itemsep=0mm,leftmargin=5mm,topsep=0mm] 
\item MFLD \eqref{eq:finite_particle_mfld} in finite-particle and continuous-time setting satisfies
\[ \frac{1}{N}\pow[\cL,N](\pow[\mu,N]_t) - \cL(\mu_*) 
\leq \frac{LR^2}{2N} + \exp( -2\bar{\alpha}\lambda t )\left( \frac{1}{N}\pow[\cL,N](\pow[\mu,N]_0) - \frac{1}{N}\pow[\cL,N](\pow[\mu,N]_*) \right), \]
\item MFLD \eqref{eq:discrete_mfld} with $\eta \lambda' < 1/2$ in finite-particle and discrete-time setting satisfies
\[ \frac{1}{N}\pow[\cL,N](\pow[\mu,N]_k) - \cL(\mu_*) 
\leq \frac{LR^2}{2N} + \frac{ \pow[\delta,N]_{\eta}}{2 \bar{\alpha} \lambda} 
+ \exp( -\bar{\alpha}\lambda\eta k )\left( \frac{1}{N}\pow[\cL,N](\pow[\mu,N]_0) - \frac{1}{N}\pow[\cL,N](\pow[\mu,N]_*) \right), \]
where $\pow[\delta,N]_\eta =  16\eta( M_2^2 + \lambda^{\prime 2}) (\eta M_1^2 + \lambda d) 
    +  64 \eta^2 \lambda'^2( M_2^2 + \lambda^{\prime 2}) \left( \frac{\bE\left[ \left\| \vX_0 \right\|_2^2 \right]}{N} + \frac{1}{\lambda'}\left(\frac{M_1^2}{4\lambda'} + \lambda d\right) \right)$.
\end{enumerate}
\end{theorem}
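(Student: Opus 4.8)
The plan is to exploit the observation already recorded in the preliminaries that the finite-particle MFLD is nothing but the (unadjusted) Langevin dynamics on $\bR^{dN}$ for the lifted potential $U(\vx) = N F(\vx)$ at temperature $\lambda$, whose invariant measure is $\pow[\mu,N]_* \propto \exp(-U/\lambda)$. Since $\pow[\cL,N]$ is an entropy-regularized \emph{linear} functional, the objective gap is exactly a KL divergence: a short computation using $U = -\lambda\log\pow[\mu,N]_* - \lambda\log Z$ gives $\frac{1}{N}\pow[\cL,N](\pow[\mu,N]) - \frac{1}{N}\pow[\cL,N](\pow[\mu,N]_*) = \frac{\lambda}{N}\KL(\pow[\mu,N]\|\pow[\mu,N]_*)$. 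Consequently both assertions reduce to a standard LSI-based analysis of (discretized) Langevin dynamics in KL, and the only mean-field-specific ingredient is Theorem \ref{theorem:mf_poc}, which I would invoke only at the very end to replace $\frac{1}{N}\pow[\cL,N](\pow[\mu,N]_*)$ by $\cL(\mu_*) + \frac{LR^2}{2N}$.

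For the continuous-time assertion I would write the Fokker--Planck equation for $\rho_t = \pow[\mu,N]_t$ and differentiate the KL along the flow, obtaining $\frac{\rd}{\rd t}\KL(\rho_t\|\pow[\mu,N]_*) = -\lambda\,\FI(\rho_t\|\pow[\mu,N]_*)$ after one integration by parts. Assumption \ref{assumption:ls_ineq}, in the form $\KL\leq\frac{1}{2\bar\alpha}\FI$ obtained by the substitution $g=\sqrt{\rd\pow[\mu,N]/\rd\pow[\mu,N]_*}$ noted right after the assumption, then gives $\frac{\rd}{\rd t}\KL\leq-2\bar\alpha\lambda\KL$, so Gr\"onwall yields $\KL(\rho_t\|\pow[\mu,N]_*)\leq e^{-2\bar\alpha\lambda t}\KL(\rho_0\|\pow[\mu,N]_*)$. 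Multiplying by $\lambda/N$, rewriting both sides as objective gaps via the identity above, and adding the bound of Theorem \ref{theorem:mf_poc} produces exactly the first displayed inequality.

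For the discrete-time assertion I would use the one-step interpolation argument of \cite{vempala2019rapid}: over each step freeze the drift at $\vX_k$ and run $\rd\vX_t = -\nabla U(\vX_k)\,\rd t + \sqrt{2\lambda}\,\rd\vW_t$, whose endpoint law matches \eqref{eq:discrete_mfld}. Differentiating the KL along this interpolation gives a Fisher-information dissipation of order $\lambda\,\FI(\rho_t\|\pow[\mu,N]_*)$ minus a discretization error controlled by $\bE[\|\nabla U(\vX_t) - \nabla U(\vX_k)\|_2^2]$; combining with the LSI converts the dissipation into a one-step contraction $e^{-\bar\alpha\lambda\eta}$, and iterating over $k$ while summing the resulting geometric series produces the $\frac{\pow[\delta,N]_\eta}{2\bar\alpha\lambda}$ term. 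The delicate point is bounding the error without incurring the ambient dimension $dN$: here I would use that, by Assumption \ref{assumption:regularity}, the block gradient $\nabla_{x^i}U(\vx) = \nabla\frac{\delta F(\mu_\vx)}{\delta\mu}(x^i) = \nabla\frac{\delta F_0(\mu_\vx)}{\delta\mu}(x^i) + 2\lambda' x^i$ is Lipschitz with $N$-\emph{independent} constants, so that $\|\nabla U(\vx)-\nabla U(\vx')\|_2^2\lesssim(M_2^2+\lambda'^2)\|\vx-\vx'\|_2^2$ after using $W_2(\mu_\vx,\mu_{\vx'})^2\leq\frac{1}{N}\|\vx-\vx'\|_2^2$, and $\|\nabla U(\vx)\|_2^2\lesssim N M_1^2 + \lambda'^2\|\vx\|_2^2$. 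Splitting the one-step movement $\bE\|\vX_t-\vX_k\|_2^2$ into a drift part ($\lesssim\eta^2\|\nabla U(\vX_k)\|_2^2$) and a noise part ($\lesssim\lambda\eta\,dN$) and then normalizing by $N$ yields the per-particle quantities $\eta M_1^2 + \lambda d$ and $\frac{\bE\|\vX_0\|_2^2}{N}$ appearing in $\pow[\delta,N]_\eta$.

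The main obstacle, and the step I would treat most carefully, is this discretization-error bound together with a \emph{uniform-in-$k$} control of the normalized second moment $\frac{\bE\|\vX_k\|_2^2}{N}$. A naive application of unadjusted-Langevin estimates would carry factors of $dN$ and $\|\nabla U\|_2^2 = N^2\|\nabla F\|_2^2$, which even after dividing by $N$ would still blow up; avoiding this requires systematically exploiting the per-block structure above so that every constant is $O(1)$ per particle and the error scales only linearly in $N$. For the second moment I would establish a one-step recursion of the form $\frac{\bE\|\vX_{k+1}\|_2^2}{N}\leq(1-c\eta)\frac{\bE\|\vX_k\|_2^2}{N} + O(\eta)$ using the $2\lambda'$-confinement coming from the $\lambda'\|x\|_2^2$ term (this is where the hypothesis $\eta\lambda' < 1/2$ enters), whose fixed point gives the uniform bound $\frac{\bE\|\vX_k\|_2^2}{N}\leq\frac{\bE\|\vX_0\|_2^2}{N} + \frac{1}{\lambda'}\left(\frac{M_1^2}{4\lambda'}+\lambda d\right)$ that feeds the second summand of $\pow[\delta,N]_\eta$. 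Once these two estimates are in place the remaining manipulations are routine bookkeeping, and adding Theorem \ref{theorem:mf_poc} at the end supplies the $\frac{LR^2}{2N}$ term.
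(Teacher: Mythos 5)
Your proposal is correct and follows essentially the same route as the paper: the identity $\frac{1}{N}\pow[\cL,N](\pow[\mu,N])-\frac{1}{N}\pow[\cL,N](\pow[\mu,N]_*)=\frac{\lambda}{N}\KL(\pow[\mu,N]\|\pow[\mu,N]_*)$, the LSI-based contraction for the continuous flow, the Vempala--Wibisono one-step interpolation with the drift frozen at $\vX_k$, the per-block Lipschitz bounds via $W_2^2(\mu_{\vx},\mu_{\vx'})\leq\frac{1}{N}\|\vx-\vx'\|_2^2$ to keep all constants $O(1)$ per particle, and the $(1-2\eta\lambda')$-contraction recursion for the uniform second-moment bound are exactly the ingredients of the paper's proof (Appendix~\ref{subsec:mfld_conv} and Lemma~\ref{lemma:finite_second_moment}), with Theorem~\ref{theorem:mf_poc} invoked only at the end to supply the $\frac{LR^2}{2N}$ term.
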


The term of $\frac{LR^2}{2N}$ is the particle approximation error inherited from Theorem \ref{theorem:mf_poc}. Again our result shows the LSI-constant independence particle approximation error for MFLD unlike existing results \citep{chen2022uniform,suzuki2023convergence} where their error bounds $O(\frac{\lambda}{\alpha N})$ scale inversely with LSI-constant $\alpha$ as seen in \eqref{eq:prev_poc} and \eqref{eq:discrete_mfld_poc}.
Hence, the required number of particles to achieve $\epsilon$-accurate optimization: $\frac{1}{N}\pow[\cL,N](\pow[\mu,N]) - \cL(\mu_*) \leq \epsilon$ suggested by our result and \cite{chen2022uniform,suzuki2023convergence} are $N = O(\frac{1}{\epsilon})$ and $N=O(\frac{\lambda}{\alpha \epsilon})$, respectively. Whereas the iterations complexity of MFLD \eqref{eq:discrete_mfld} is $O(\frac{1}{\bar{\alpha}^2\lambda\epsilon}\log\frac{1}{\epsilon})$ which is same as that in \cite{suzuki2023convergence} up to a difference in LSI constants $\alpha$ or $\bar{\alpha}$.

\subsubsection{Sampling guarantee for $\mu_*$}
After running the finite-particle MFLD with a sufficient number of particles for a long time, each particle is expected to be distributed approximately according to $\mu_*$. In Corollary \ref{corollary:sampling_mfld}, we justify this sampling procedure for $\mu_*$ as an application of Theorem \ref{theorem:discrete_mfld}. We set $\pow[\Delta,N]_0 = \frac{1}{N}\pow[\cL,N](\pow[\mu,N]_0) - \frac{1}{N}\pow[\cL,N](\pow[\mu,N]_*)$ and write the marginal distribution of $\pow[\mu,N]_t / \pow[\mu,N]_k$ on the first particle $x^1$ as  $\pow[\mu,N]_{t,1} / \pow[\mu,N]_{k,1}$.
\begin{corollary}\label{corollary:sampling_mfld}
Under the same conditions as in Theorem \ref{theorem:discrete_mfld}, we run MFLDs \eqref{eq:finite_particle_mfld} and \eqref{eq:discrete_mfld} with i.i.d. initial particles $\vX=(X_0^1,\ldots,X_0^N)$. Then, it follows that
\begin{enumerate}[itemsep=0mm,leftmargin=5mm,topsep=0mm] 
\item MFLD \eqref{eq:finite_particle_mfld} in finite-particle and continuous-time setting satisfies
\[ \lambda \KL(\pow[\mu,N]_{t,1}\|\mu_*) \leq \cL(\pow[\mu,N]_{t,1}) - \cL(\mu_*)
\leq \frac{LR^2}{2N} + \exp( -2\bar{\alpha}\lambda t )\pow[\Delta,N]_0, \]
\item MFLD \eqref{eq:discrete_mfld} with $\eta \lambda' < 1/2$ in finite-particle and discrete-time setting satisfies
\[ \lambda \KL(\pow[\mu,N]_{k,1}\|\mu_*) \leq \cL(\pow[\mu,N]_{k,1}) - \cL(\mu_*)
\leq \frac{LR^2}{2N} + \frac{ \pow[\delta,N]_{\eta}}{2 \bar{\alpha} \lambda} 
+ \exp( -\bar{\alpha}\lambda\eta k )\pow[\Delta,N]_0. \]
\end{enumerate}
\end{corollary}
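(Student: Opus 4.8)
The plan is to prove each of the two displayed chains by establishing its left and right inequalities separately: a general, LSI-free lower bound of the form $\lambda\KL(\nu\|\mu_*)\leq \cL(\nu)-\cL(\mu_*)$ valid for \emph{every} $\nu\in\cP_2(\bR^d)$, and an upper bound that reduces the objective gap of the single-particle marginal to the joint objective gap $\frac{1}{N}\pow[\cL,N](\pow[\mu,N])-\cL(\mu_*)$ already controlled by Theorem \ref{theorem:discrete_mfld}. Both the continuous- and discrete-time assertions then follow by plugging the respective bounds of Theorem \ref{theorem:discrete_mfld} into the right-hand side.

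For the left inequality, I would fix an arbitrary $\nu\in\cP_2(\bR^d)$ and use convexity of $F$ in the sense of \eqref{eq:convexity} together with the optimality condition $\mu_*=\hat{\mu}_*$, i.e. $\frac{\delta F(\mu_*)}{\delta\mu}(x)=-\lambda\log\frac{\rd\mu_*}{\rd x}(x)-\lambda\log Z(\mu_*)$. Writing $\cL(\nu)-\cL(\mu_*)=[F(\nu)-F(\mu_*)]+\lambda[\Ent(\nu)-\Ent(\mu_*)]$ and lower bounding $F(\nu)-F(\mu_*)$ by $\int\frac{\delta F(\mu_*)}{\delta\mu}(x)(\nu-\mu_*)(\rd x)$, the normalization constant cancels because $\nu$ and $\mu_*$ are both probability measures, and the remaining terms reassemble exactly into $\lambda\KL(\nu\|\mu_*)$. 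Specializing $\nu=\pow[\mu,N]_{t,1}$ (resp. $\pow[\mu,N]_{k,1}$) gives the leftmost inequality in both assertions; note that this step is entirely independent of any LSI constant.

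The heart of the argument is the upper bound $\cL(\pow[\mu,N]_{t,1})\leq \frac{1}{N}\pow[\cL,N](\pow[\mu,N]_t)$. First I would observe that i.i.d.\ initialization together with the permutation-symmetry of the dynamics \eqref{eq:finite_particle_mfld} and \eqref{eq:discrete_mfld} keeps the joint law $\pow[\mu,N]_t$ exchangeable for all $t$ (and all $k$), so every one-particle marginal equals $\bar\mu\defeq\pow[\mu,N]_{t,1}$ and the mean empirical measure satisfies $\bE_{\vX\sim\pow[\mu,N]_t}[\mu_{\vX}]=\bar\mu$. Decomposing $\frac{1}{N}\pow[\cL,N](\pow[\mu,N]_t)=\bE_{\vX}[F(\mu_{\vX})]+\frac{\lambda}{N}\Ent(\pow[\mu,N]_t)$, I would then lower bound the first term by Jensen's inequality for the convex functional $F$ (which follows from \eqref{eq:convexity} applied at $\bar\mu$, since the first-variation term vanishes in expectation), giving $\bE_{\vX}[F(\mu_{\vX})]\geq F(\bar\mu)$, and lower bound the entropy term by subadditivity of the negative entropy, $\Ent(\pow[\mu,N]_t)\geq\sum_{i=1}^N\Ent(\pow[\mu,N]_{t,i})=N\Ent(\bar\mu)$. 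Adding these yields $\frac{1}{N}\pow[\cL,N](\pow[\mu,N]_t)\geq F(\bar\mu)+\lambda\Ent(\bar\mu)=\cL(\bar\mu)$, i.e.\ the claim. Subtracting $\cL(\mu_*)$ and invoking Theorem \ref{theorem:discrete_mfld} then produces the rightmost bounds verbatim.

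I expect the main obstacle to be the exchangeability and convexity bookkeeping rather than any single hard estimate: one must verify that exchangeability is genuinely preserved by the discrete update \eqref{eq:discrete_mfld} (the symmetric drift and i.i.d.\ Gaussian noise guarantee this), and carefully justify passing the expectation through the convex functional $F$ and through the negative entropy. Once $\cL(\bar\mu)\leq\frac{1}{N}\pow[\cL,N](\pow[\mu,N])$ is in hand, the corollary is an immediate consequence of Theorem \ref{theorem:discrete_mfld}, so no new LSI- or discretization-dependent analysis is required.
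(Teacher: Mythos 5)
Your proposal is correct and follows essentially the same route as the paper: the left inequality is the entropy sandwich $\lambda\KL(\nu\|\mu_*)\leq\cL(\nu)-\cL(\mu_*)$ (which the paper cites and you rederive from convexity of $F$ and the optimality condition), and the right inequality comes from $\cL(\pow[\mu,N]_1)\leq\frac{1}{N}\pow[\cL,N](\pow[\mu,N])$ for exchangeable laws, which the paper attributes to convexity and you correctly justify via Jensen's inequality for $F$ together with superadditivity of the negative entropy, before plugging in Theorem \ref{theorem:discrete_mfld}. Your write-up simply supplies details the paper leaves implicit; there is no substantive difference in approach.
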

\begin{proof}
For any distribution $\pow[\mu,N] \in \cP_2(\bR^{dN})$ whose marginal $\pow[\mu,N]_i$ on $i$-th coordinate $x^i$~($i \in \{1,\ldots,N\}$) are identical to each other, it follows that by the convexity of the objective function and the entropy sandwich \citep{nitanda2022convex,chizat2022mean}: $\lambda \KL(\mu\| \mu_*) \leq \cL(\mu) - \cL(\mu_*)$~$(\forall \mu \in \cP_2(\bR^d))$,
\begin{equation}\label{eq:mf_marginal_ineq}
 \lambda \KL(\pow[\mu,N]_1\|\mu_*) \leq \cL(\pow[\mu,N]_1) - \cL(\mu_*) \leq \frac{1}{N}\pow[\cL,N](\pow[\mu,N]) - \cL(\mu_*). 
\end{equation}
Because of i.i.d. initialization, the distributions of $\pow[\mu,N]_t / \pow[\mu,N]_k$ satisfies this property. That is, \eqref{eq:mf_marginal_ineq} with $\pow[\mu,N]=\pow[\mu,N]_t / \pow[\mu,N]_k$ holds. Hence, Theorem \ref{theorem:discrete_mfld} concludes the proof.
\end{proof}
Corollary \ref{corollary:sampling_mfld} shows the convergence of the objective $\cL(\cdot)$ and $\KL$-divergence $\KL(\cdot \|\mu_*)$ which attain the minimum value at $\mu = \mu_*$. For instance, we can deduce that the particle and iteration complexities to obtain $\sqrt{\KL(\pow[\mu,N]_{k,1}\|\mu_*)} < \epsilon$ by MFLD \eqref{eq:discrete_mfld} are $O(\frac{1}{\lambda \epsilon^2})$ and $O(\frac{1}{\bar{\alpha}^2\lambda^2 \epsilon^2}\log\frac{
1}{\epsilon})$, respectively, whereas \cite{kook2024sampling} proved the following particle and iteration complexities: $O(\frac{1}{\alpha\lambda \epsilon^2})$ and $O(\frac{1}{\alpha^2\lambda^2 \epsilon^2})$.

\subsubsection{Uniform-in-time Wasserstein propagation of chaos}
As another application of Theorem \ref{theorem:discrete_mfld}, we prove the uniform-in-time Wasserstein propagation of chaos for MFLDs \eqref{eq:finite_particle_mfld} and \eqref{eq:discrete_mfld}, saying that the Wasserstein distance between finite-particle system and its mean-field limit shrinks uniformly in time as $N\to \infty$. For the mean-field limit of \eqref{eq:finite_particle_mfld} in the continuous-time setting, we refer to \eqref{eq:mfld}. For the discrete-time setting \eqref{eq:discrete_mfld}, we define its mean-field limit as follows; let $\mu_k = \mathrm{Law}(X_k)$ be the distribution of the infinite-particle MFLD defined by 
\begin{equation}\label{eq:discrete_infinite_mfld}
    X_{k+1} = X_k - \eta \nabla \frac{\delta F(\mu_k)}{\delta \mu}(X_k) + \sqrt{2\lambda \eta} \xi_k,
\end{equation}
where $\xi_k \sim \cN(0,I_d)$.
Now, the uniform-in-time Wasserstein propagation of chaos for MFLDs is given below. We set $\pow[\Delta,N]_0 = \frac{1}{N}\pow[\cL,N](\pow[\mu,N]_0) - \frac{1}{N}\pow[\cL,N](\pow[\mu,N]_*)$ and $\Delta_0 = \cL(\mu_0)-\cL(\mu_*)$.
\begin{corollary}\label{corollary:mf_wasserstein_poc}
Suppose Assumptions \ref{assumption:regularity}, \ref{assumption:uniform_ls_ineq}, \ref{assumption:lipschitz_smoothness_boundedness}, and \ref{assumption:ls_ineq} hold. Then, it follows that
\begin{enumerate}[itemsep=0mm,leftmargin=5mm,topsep=0mm] 
\item discrepancy between continuous-time MFLDs \eqref{eq:mfld} and \eqref{eq:finite_particle_mfld} is uniformly bounded in time as follows:
\begin{align*}
    \frac{1}{N}W_2^2( \pow[\mu,N]_t, \tensor[\mu,N]_t) 
    \leq \frac{4}{\alpha \lambda} \left( 
    \frac{LR^2}{2N} + \exp( -2\bar{\alpha}\lambda t )\pow[\Delta,N]_0
    + \exp(-2\alpha\lambda t) \Delta_0
    \right).
\end{align*}    
\item discrepancy between discrete-time MFLDs \eqref{eq:discrete_infinite_mfld} and \eqref{eq:discrete_mfld} is uniformly bounded in time as follows:    
\begin{align*}
    \frac{1}{N}W_2^2( \pow[\mu,N]_k, \tensor[\mu,N]_k) 
    \leq \frac{4}{\alpha \lambda} \left( 
    \frac{LR^2}{2N} + \frac{\pow[\delta,N]_\eta}{2\bar{\alpha}\lambda} + \frac{\delta_\eta}{2 \alpha\lambda} + \exp( -\bar{\alpha}\lambda\eta k )\pow[\Delta,N]_0
    + \exp(-2\alpha\lambda \eta k) \Delta_0
    \right),
\end{align*}    
where $\delta_\eta = 8\eta( M_2^2 + \lambda^{\prime 2}) (2\eta M_1^2 + 2\lambda d) 
    +  32 \eta^2 \lambda'^2( M_2^2 + \lambda^{\prime 2}) \left( \bE\left[ \left\| X_0 \right\|_2^2 \right] + \frac{1}{\lambda'}\left(\frac{M_1^2}{4\lambda'} + \lambda d\right) \right)$.
\end{enumerate}
\end{corollary}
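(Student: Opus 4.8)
The plan is to route both Wasserstein distances through the product of the mean-field optimum $\tensor[\mu,N]_*$ and then convert each resulting piece into a quantity we have already controlled. Throughout, let $\bullet$ denote either the continuous-time index $t$ or the discrete-time index $k$. First I would apply the triangle inequality for $W_2$ together with $(a+b)^2 \le 2a^2 + 2b^2$ to obtain
\[
    \frac{1}{N} W_2^2(\pow[\mu,N]_\bullet, \tensor[\mu,N]_\bullet)
    \le \frac{2}{N} W_2^2(\pow[\mu,N]_\bullet, \tensor[\mu,N]_*)
    + \frac{2}{N} W_2^2(\tensor[\mu,N]_*, \tensor[\mu,N]_\bullet),
\]
so that the analysis decouples into a ``finite-particle versus product-optimum'' leg and a ``product-optimum versus product-mean-field'' leg.

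For the second leg I would use that $W_2^2$ tensorizes over product measures, $W_2^2(\tensor[\mu,N]_*, \tensor[\mu,N]_\bullet) = N\, W_2^2(\mu_*, \mu_\bullet)$. Since $\mu_* = \hat{\mu}_*$ satisfies LSI with constant $\alpha$ (Assumption \ref{assumption:uniform_ls_ineq}), Talagrand's inequality gives $W_2^2(\mu_\bullet, \mu_*) \le \frac{2}{\alpha}\KL(\mu_\bullet \| \mu_*)$, and the entropy sandwich $\lambda \KL(\mu_\bullet \| \mu_*) \le \cL(\mu_\bullet) - \cL(\mu_*)$ converts this into a statement about the objective gap of the \emph{infinite-particle} dynamics. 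Invoking the convergence of MFLD --- namely $\cL(\mu_t) - \cL(\mu_*) \le \exp(-2\alpha\lambda t)\Delta_0$ in continuous time, and the discrete-time analysis of \eqref{eq:discrete_infinite_mfld} giving $\cL(\mu_k) - \cL(\mu_*) \le \frac{\delta_\eta}{2\alpha\lambda} + \exp(-2\alpha\lambda\eta k)\Delta_0$ --- then produces exactly the $\exp(-2\alpha\lambda t)\Delta_0$ (resp.\ $\frac{\delta_\eta}{2\alpha\lambda} + \exp(-2\alpha\lambda\eta k)\Delta_0$) contribution sitting inside the prefactor $\frac{4}{\alpha\lambda}$ after dividing by $N$.

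For the first leg the key observation is that $\tensor[\mu,N]_*$ likewise satisfies LSI with the \emph{same} constant $\alpha$, by tensorization of Assumption \ref{assumption:uniform_ls_ineq}; hence Talagrand's inequality again yields $\frac{2}{N}W_2^2(\pow[\mu,N]_\bullet, \tensor[\mu,N]_*) \le \frac{4}{\alpha N}\KL(\pow[\mu,N]_\bullet \| \tensor[\mu,N]_*)$. I would then invoke the $N$-particle entropy sandwich
\[
    \frac{\lambda}{N}\KL(\pow[\mu,N]_\bullet \| \tensor[\mu,N]_*)
    \le \frac{1}{N}\pow[\cL,N](\pow[\mu,N]_\bullet) - \cL(\mu_*),
\]
which is precisely the left-hand inequality of Theorem \ref{theorem:mf_poc} stated for a general $N$-particle law in place of $\pow[\mu,N]_*$; it follows from the same Bregman-divergence computation --- the gap between the two sides equals $\bE[F(\mu_{\vX}) - F(\mu_*) - \pd<\tfrac{\delta F(\mu_*)}{\delta \mu}, \mu_{\vX}-\mu_*>]\ge 0$ by convexity of $F$. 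Feeding in the finite-particle convergence bounds of Theorem \ref{theorem:discrete_mfld} supplies the $\frac{LR^2}{2N}$, $\frac{\pow[\delta,N]_\eta}{2\bar{\alpha}\lambda}$, and exponential-decay terms, again carrying the prefactor $\frac{4}{\alpha\lambda}$. Summing the two legs gives the stated bounds.

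The main obstacle I anticipate is the discrete-time component of the mean-field leg: establishing $\cL(\mu_k) - \cL(\mu_*) \le \frac{\delta_\eta}{2\alpha\lambda} + \exp(-2\alpha\lambda\eta k)\Delta_0$ for the infinite-particle scheme \eqref{eq:discrete_infinite_mfld} requires a one-step interpolation argument at the level of the mean-field dynamics and careful tracking of the discretization constant $\delta_\eta$ (the single-particle analogue of $\pow[\delta,N]_\eta$). By contrast, the two tensorization facts --- that both $W_2^2$ and the LSI constant $\alpha$ pass to product measures --- together with the already-established entropy sandwiches are routine, so the remaining effort is bookkeeping of constants to confirm that every piece shares the common prefactor $\frac{4}{\alpha\lambda}$.
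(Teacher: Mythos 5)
Your proposal is correct and follows essentially the same route as the paper: the triangle inequality through $\tensor[\mu,N]_*$, tensorization of the LSI, Talagrand's inequality, the two entropy sandwiches (the $N$-particle one being exactly Proposition \ref{prop:objective_gap} with $\mu=\mu_*$, where the nonnegative gap is the expected Bregman divergence), and the convergence rates of Theorem \ref{theorem:discrete_mfld} and the infinite-particle MFLD. The discrete-time mean-field rate you flag as the remaining obstacle is simply quoted from prior work in the paper rather than re-derived.
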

\begin{proof}
We only prove the first assertion because the second can be proven similarly.
We apply the triangle inequality to $W_2$-distance as follows:
\begin{align*}
    W_2^2( \pow[\mu,N]_t, \tensor[\mu,N]_t)
    \leq 2\left( W_2^2( \pow[\mu,N]_t, \tensor[\mu,N]_*) + W_2^2( \tensor[\mu,N]_*, \tensor[\mu,N]_t) \right)
\end{align*}
Note that LSI with the same constant is preserved under tensorization: $\mu_* \to \tensor[\mu_*,N]$. 
Then, by Taragland's inequality \citep{otto2000generalization}, Proposition \ref{prop:objective_gap} with $\mu=\mu_*$, and the entropy sandwich \citep{nitanda2022convex,chizat2022mean}: $\lambda \KL(\mu_t\|\mu_*) \leq \cL(\mu_t) - \cL(\mu_*)$, we get
\begin{align*}
    &\frac{\alpha}{2} W_2^2( \pow[\mu,N]_t, \tensor[\mu,N]_*) 
    \leq \KL(  \pow[\mu,N]_t \| \tensor[\mu,N]_* ) 
    \leq \frac{1}{\lambda}(\pow[\cL,N]( \pow[\mu,N]_t) - N\cL(\mu_*)), \\
    &\frac{\alpha}{2} W_2^2( \tensor[\mu,N]_*, \tensor[\mu,N]_t) 
    = \frac{\alpha}{2} N W_2^2( \mu_*, \mu_t) 
    \leq N \KL( \mu_t\| \mu_* ) 
    \leq \frac{N}{\lambda}( \cL(\mu_t) - \cL(\mu_*)).
\end{align*}
Applying the convergence rates of finite- and infinite-particle MFLDs (Theorem \ref{theorem:discrete_mfld} and \cite{nitanda2022convex}), we conclude the proof. For completeness, we include the auxiliary results used in the proof in Appendix \ref{sec:auxiliary}.
\end{proof}
Corollary \ref{corollary:mf_wasserstein_poc} uniformly controls the gap between $N$-particle system $\pow[\mu,N]_t / \pow[\mu,N]_k$ and its mean-field limit $\tensor[\mu,N]_t / \tensor[\mu,N]_k$. Again this result shows an improved particle approximation error $O(\frac{1}{\alpha\lambda N})$ over $O(\frac{1}{\alpha^2N})$ \citep{chen2022uniform,suzuki2023convergence}.
Additionally, the propagation of chaos result in terms of TV-norm can be proven by using Pinsker's inequality instead of Talagrand's inequality in the proof. For the continuous-time MFLDs, we get
\begin{align*}
    \frac{1}{N}\TV^2(\pow[\mu,N]_t, \tensor[\mu,N]_t )
    \leq \frac{1}{\lambda} \left( 
    \frac{LR^2}{2N} + \exp( -2\bar{\alpha}\lambda t )\pow[\Delta,N]_0
    + \exp(-2\alpha\lambda t) \Delta_0
    \right),
\end{align*}  
and TV-norm counterpart for the discrete-time can be derived similary.

\section{Proof outline and key results}\label{sec:proof}
In this section, we provide the proof sketch of Theorem \ref{theorem:mf_poc}.
Our analysis carefully treats the non-linearity of $F_0$ because the particle approximation errors, the gap between $\cL / \mu_*$ and $\pow[\cL,N] / \pow[\mu,N]_*$, come from this non-linearity. In fact if $F_0$ is a linear functional: $F_0(\mu) = \bE_\mu[f]$ ($\exists f:\bR^d \rightarrow \bR$), then $\pow[\cL,N](\pow[\mu,N]) = \sum_{i=1}^N \mathbb{E}_{X^i \sim \pow[\mu,N]_i}[ f(X^i) + \lambda' \|X^i\|_2^2] + \lambda \Ent(\pow[\mu,N]) \geq \sum_{i=1}^N \cL(\pow[\mu_i,N])$, where $\pow[\mu,N]_i$ are marginal distributions on $X^i$. This results in $\pow[\mu,N]_* = \tensor[\mu,N]_*$ and $\pow[\cL,N](\pow[\mu,N]_*) = N\cL(\mu_*)$, and thus there is no approximation error by using finite-particles as also deduced from Theorem \ref{theorem:mf_poc} with $L=0$. Therefore, we should take into account the non-linearity of $F_0$ to tightly evaluate the gap between $\cL$ and $\pow[\cL,N]$.

To do so, we define Bregman divergence based on $F$ on $\cP_2(\bR^d)$ as follows; for any $\mu,~\mu' \in \cP_2(\bR^d)$,
\begin{equation}\label{eq:bregman}
    B_F(\mu,\mu') 
    = F(\mu) - F(\mu') - \pd<\frac{\delta F(\mu')}{\delta \mu}, \mu - \mu'>.
\end{equation}
$B_F$ measures the discrepancy between $\mu$ and $\mu'$ in light of the strength of the convexity. If $F$ is linear with respect to the distribution, $B_F=0$ clearly holds.
By the convexity $F$, we see $B_F(\mu,\mu') \geq 0$. Moreover, we see the following relationship between $\pow[\mu,N]_*$ and $\hat{\mu}$ for any $\mu \in \cP_2(\bR^d)$:
\begin{align}\label{eq:bridge}
    \frac{\rd \pow[\mu,N]_*}{\rd \vx} (\vx) 
    &\propto \exp\left( -\frac{N}{\lambda}F(\vx) \right) \notag\\
    &= \exp\left( -\frac{N}{\lambda} \left( F(\mu) + \pd<\frac{\delta F(\mu)}{\delta \mu}, \mu_\vx - \mu> + B_F(\mu_{\vx},\mu) \right)\right) \notag\\
    &\propto \exp\left( -\frac{N}{\lambda}B_F(\mu_{\vx},\mu) \right)
    \prod_{i=1}^N \exp\left( -\frac{1}{\lambda}\frac{\delta F(\mu)}{\delta \mu}(x^i) \right) \notag\\
    &\propto \exp\left( -\frac{N}{\lambda}B_F(\mu_{\vx},\mu) \right) 
    \frac{\rd \hat{\mu}^{\otimes N}}{\rd \vx}(\vx).
\end{align}
The proximal Gibbs distribution $\hat{\mu}$ has been introduced in \cite{nitanda2022convex} as a proxy for the solution $\mu_*$ and it coincides with $\mu_*$ when $F$ is a linear functional. The above equation \eqref{eq:bridge} naturally reflects this property since it bridges the gap between $\pow[\mu,N]_*$ and $\hat{\mu}^{\otimes N}$ using $B_F$ and leads to $\pow[\mu,N]_* = \hat{\mu}^{\otimes N}$ for the linear functional $F$.

Next, we provide key propositions whose proofs can be found in Appendix \ref{sec:omitted_proofs}. The following proposition expresses objective gaps $\pow[\cL,N]( \pow[\mu,N]) - N\cL(\mu)$ and $\pow[\cL,N]( \pow[\mu,N]) - \pow[\cL,N](\hat{\mu}^{\otimes N})$ using only divergences.
\begin{proposition}\label{prop:objective_gap}
For $\mu \in \cP_2(\bR^d)$ and $\pow[\mu,N] \in \cP_2(\bR^{dN})$, we have
\begin{align}
 &\pow[\cL,N]( \pow[\mu,N]) - N\cL(\mu) 
 = N \bE_{\vX \sim \pow[\mu,N]}\left[ B_F( \mu_\vX, \mu) \right] 
 + \lambda \KL(\pow[\mu,N] \| \hat{\mu}^{\otimes N}) - \lambda N \KL(\mu \| \hat{\mu}), \label{eq:objective_gap_gen} \\
 &\pow[\cL,N]( \pow[\mu,N]) - \pow[\cL,N](\hat{\mu}^{\otimes N}) 
 = N\int B_F(\mu_\vx,\mu) (\pow[\mu,N] - \hat{\mu}^{\otimes N})(\rd\vx)
 + \lambda \KL(\pow[\mu,N] \| \hat{\mu}^{\otimes N}). \label{eq:objective_gap2_gen}
\end{align}
\end{proposition}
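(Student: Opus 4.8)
The plan is to prove the first identity \eqref{eq:objective_gap_gen} by a direct computation, expanding $\pow[\cL,N]$ around the fixed reference measure $\mu$ via the Bregman divergence, and then to obtain the second identity \eqref{eq:objective_gap2_gen} as a formal consequence by applying the first with two different arguments and subtracting. The engine of the whole computation is the defining relation \eqref{eq:proximal-Gibbs} of the proximal Gibbs distribution, which lets me trade the linear first-variation term for a logarithm of $\hat{\mu}^{\otimes N}$, exactly as already exploited in the bridge identity \eqref{eq:bridge}.

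Concretely, first I would write $\pow[\cL,N](\pow[\mu,N]) = \int \frac{\rd\pow[\mu,N]}{\rd\vx}(\vx)\bigl[ NF(\mu_\vx) + \lambda\log\frac{\rd\pow[\mu,N]}{\rd\vx}(\vx)\bigr]\rd\vx$ and expand the cost using the definition \eqref{eq:bregman}: $NF(\mu_\vx) = NF(\mu) + N\pd<\frac{\delta F(\mu)}{\delta \mu}, \mu_\vx - \mu> + N B_F(\mu_\vx,\mu)$. The crucial step is to recognize that the empirical-measure part of the linear term is, up to an additive constant, a log-density of the tensorized proximal Gibbs measure: taking logarithms in \eqref{eq:proximal-Gibbs} gives $\frac{\delta F(\mu)}{\delta \mu}(x^i) = -\lambda\log\frac{\rd\hat{\mu}}{\rd x}(x^i) - \lambda\log Z(\mu)$, and summing over $i$ yields $N\pd<\frac{\delta F(\mu)}{\delta \mu}, \mu_\vx> = \sum_{i=1}^N \frac{\delta F(\mu)}{\delta \mu}(x^i) = -\lambda\log\frac{\rd\hat{\mu}^{\otimes N}}{\rd\vx}(\vx) - N\lambda\log Z(\mu)$.

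Substituting this back, the bracket becomes $NF(\mu) - N\lambda\log Z(\mu) - N\pd<\frac{\delta F(\mu)}{\delta \mu}, \mu> + N B_F(\mu_\vx,\mu) + \lambda\log\frac{\rd\pow[\mu,N]}{\rd\hat{\mu}^{\otimes N}}(\vx)$, where the last term arises by combining the entropy integrand $\lambda\log\frac{\rd\pow[\mu,N]}{\rd\vx}$ with $-\lambda\log\frac{\rd\hat{\mu}^{\otimes N}}{\rd\vx}$. Integrating against $\pow[\mu,N]$, the last term produces $\lambda\KL(\pow[\mu,N]\|\hat{\mu}^{\otimes N})$ and the Bregman term produces $N\bE_{\pow[\mu,N]}[B_F(\mu_\vX,\mu)]$, while the deterministic remainder is $NF(\mu) - N\lambda\log Z(\mu) - N\pd<\frac{\delta F(\mu)}{\delta \mu}, \mu>$. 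I would then check, once more from \eqref{eq:proximal-Gibbs}, that this deterministic remainder equals $N\cL(\mu) - \lambda N\KL(\mu\|\hat{\mu})$: indeed $\lambda\KL(\mu\|\hat{\mu}) = \lambda\Ent(\mu) + \pd<\frac{\delta F(\mu)}{\delta \mu}, \mu> + \lambda\log Z(\mu)$, so that $N\cL(\mu) - \lambda N\KL(\mu\|\hat{\mu}) = NF(\mu) - N\pd<\frac{\delta F(\mu)}{\delta \mu}, \mu> - N\lambda\log Z(\mu)$. This closes \eqref{eq:objective_gap_gen}. For \eqref{eq:objective_gap2_gen} I would simply instantiate \eqref{eq:objective_gap_gen} once with the given $\pow[\mu,N]$ and once with $\hat{\mu}^{\otimes N}$ in place of $\pow[\mu,N]$, and subtract: the term $\lambda N\KL(\mu\|\hat{\mu})$ is identical in both and cancels, the KL term for the second instance vanishes since $\KL(\hat{\mu}^{\otimes N}\|\hat{\mu}^{\otimes N})=0$, and the two Bregman expectations combine into the single signed integral $N\int B_F(\mu_\vx,\mu)(\pow[\mu,N]-\hat{\mu}^{\otimes N})(\rd\vx)$.

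I expect no deep obstacle; the argument is essentially bookkeeping. The one place demanding care is the correct handling of the normalization constant $Z(\mu)$ and the verification that the three $\mu$-dependent deterministic terms reassemble exactly into $N\cL(\mu) - \lambda N\KL(\mu\|\hat{\mu})$, since a sign slip in the $\log Z(\mu)$ or first-variation contributions would break the cancellation. The conceptual key, rather than a difficulty, is the observation that the linear first-variation term is, up to the additive constant $\lambda\log Z(\mu)$, nothing but $-\lambda\log\hat{\mu}^{\otimes N}$, which is precisely what converts the bare entropy of $\pow[\mu,N]$ into the relative entropy $\KL(\pow[\mu,N]\|\hat{\mu}^{\otimes N})$.
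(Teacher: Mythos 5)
Your proposal is correct. For the first identity \eqref{eq:objective_gap_gen} your computation is essentially the paper's: both expand $NF(\mu_\vx)$ via the Bregman decomposition and convert the first-variation term into $-\lambda\log$ of the tensorized proximal Gibbs density so that the bare entropy of $\pow[\mu,N]$ becomes $\lambda\KL(\pow[\mu,N]\|\hat{\mu}^{\otimes N})$; the only cosmetic difference is that you carry $\log Z(\mu)$ explicitly and verify at the end that it reassembles into $-\lambda N\KL(\mu\|\hat{\mu})$, whereas the paper disposes of it immediately by pairing $\frac{\delta F(\mu)}{\delta\mu}$ with the mean-zero signed measure $\mu_\vX-\mu$, against which additive constants vanish. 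Where you genuinely depart from the paper is \eqref{eq:objective_gap2_gen}: the paper proves it by a second, independent direct computation of $\pow[\cL,N](\pow[\mu,N])-\pow[\cL,N](\hat{\mu}^{\otimes N})$, while you obtain it by instantiating \eqref{eq:objective_gap_gen} at $\pow[\mu,N]$ and at $\hat{\mu}^{\otimes N}$ and subtracting, using that the $-\lambda N\KL(\mu\|\hat{\mu})$ terms cancel and $\KL(\hat{\mu}^{\otimes N}\|\hat{\mu}^{\otimes N})=0$. Your route is shorter and makes the logical dependence of the second identity on the first explicit; the paper's direct computation is self-contained and avoids having to invoke the first identity for the specific (product) measure $\hat{\mu}^{\otimes N}$. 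Both are valid, modulo the same implicit finiteness assumptions on the entropies that the paper also takes for granted.
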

The following proposition shows that the $\KL$-divergence between $\pow[\mu,N]_*$ and $\hat{\mu}^{\otimes N}$ can be upper-bounded by the Bregman divergence $B_F$.
\begin{proposition}\label{prop:kl_bound}
For any $\mu \in \cP_2(\bR^d)$, we have
\begin{equation}\label{eq:kl_bound}
     \KL( \pow[\mu,N]_* \| \hat{\mu}^{\otimes N} ) 
    \leq \frac{N}{\lambda} \int B_F(\mu_\vx, \mu) (\hat{\mu}^{\otimes N} - \pow[\mu,N]_*)(\rd \vx).
\end{equation}    
\end{proposition}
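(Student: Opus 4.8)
The plan is to leverage the bridge identity \eqref{eq:bridge}, which already exhibits $\pow[\mu,N]_*$ as an exponential tilt of $\hat{\mu}^{\otimes N}$ by the Bregman factor. Writing $Z^{(N)} \defeq \bE_{\hat{\mu}^{\otimes N}}\!\left[\exp\!\left(-\tfrac{N}{\lambda}B_F(\mu_\vX,\mu)\right)\right]$ for the normalizing constant implicit in the $\propto$ of \eqref{eq:bridge}, the density ratio is
\begin{equation*}
    \frac{\rd \pow[\mu,N]_*}{\rd \hat{\mu}^{\otimes N}}(\vx)
    = \frac{1}{Z^{(N)}}\exp\!\left(-\frac{N}{\lambda}B_F(\mu_\vx,\mu)\right).
\end{equation*}
Taking logarithms and integrating against $\pow[\mu,N]_*$ then gives the exact identity
\begin{equation*}
    \KL(\pow[\mu,N]_* \,\|\, \hat{\mu}^{\otimes N})
    = -\frac{N}{\lambda}\int B_F(\mu_\vx,\mu)\,\pow[\mu,N]_*(\rd\vx) - \log Z^{(N)},
\end{equation*}
so the entire task collapses to upper-bounding the single scalar $-\log Z^{(N)}$.

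The key step is a one-line application of Jensen's inequality to the convex map $t \mapsto e^{t}$: since $Z^{(N)}$ is the $\hat{\mu}^{\otimes N}$-average of $\exp(-\tfrac{N}{\lambda}B_F)$,
\begin{equation*}
    Z^{(N)} \geq \exp\!\left(-\frac{N}{\lambda}\int B_F(\mu_\vx,\mu)\,\hat{\mu}^{\otimes N}(\rd\vx)\right),
\end{equation*}
whence $-\log Z^{(N)} \leq \tfrac{N}{\lambda}\int B_F(\mu_\vx,\mu)\,\hat{\mu}^{\otimes N}(\rd\vx)$. Substituting this into the exact identity merges the two integrals against the signed measure $\hat{\mu}^{\otimes N}-\pow[\mu,N]_*$ and produces \eqref{eq:kl_bound} verbatim. (Equivalently, one may invoke the Gibbs variational principle and evaluate the Donsker--Varadhan infimum at the feasible point $\hat{\mu}^{\otimes N}$; this yields the same bound on $-\log Z^{(N)}$.)

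There is no genuine analytic obstacle here: the content lies entirely in recognizing that the normalization constant from \eqref{eq:bridge} is precisely the log-partition term in the divergence, so that the Jensen estimate on $Z^{(N)}$ supplies exactly the $\hat{\mu}^{\otimes N}$-integral of $B_F$. The only points needing care are bookkeeping — confirming that the constant hidden in the $\propto$ of \eqref{eq:bridge} is this $Z^{(N)}$ and not the Gibbs partition function $Z(\mu)^N$, from which it differs by the $\vx$-independent factor $\exp(-\tfrac{N}{\lambda}F(\mu))$ that is absorbed when passing from $\exp(-\tfrac{N}{\lambda}F(\vx))$ to $\exp(-\tfrac{N}{\lambda}B_F)\,\tfrac{\rd\hat{\mu}^{\otimes N}}{\rd\vx}$ — and observing that $B_F \geq 0$ (convexity of $F$) together with the finiteness of $Z(\mu)$ from Assumption \ref{assumption:regularity} keeps every quantity well defined.
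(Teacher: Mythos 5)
Your proof is correct and is essentially identical to the paper's: both identify the normalizing constant hidden in the $\propto$ of \eqref{eq:bridge}, namely $Z_F(\mu)=\int \exp\bigl(-\tfrac{N}{\lambda}B_F(\mu_\vx,\mu)\bigr)\hat{\mu}^{\otimes N}(\rd\vx)$, write the KL divergence exactly as $-\tfrac{N}{\lambda}\int B_F(\mu_\vx,\mu)\,\pow[\mu,N]_*(\rd\vx)-\log Z_F(\mu)$, and bound $-\log Z_F(\mu)$ via Jensen's inequality applied to the exponential. The added remarks on the Donsker--Varadhan alternative and on which constant the $\propto$ absorbs are correct but not needed.
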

By applying Eq.~\eqref{eq:kl_bound} to Eq.~\eqref{eq:objective_gap_gen} with $\mu=\mu_*$ and $\pow[\mu,N]=\pow[\mu,N]_*$, we obtain an important inequality: $\pow[\cL,N]( \pow[\mu,N]_*) - N\cL(\mu_*) \leq N\bE_{\vX \sim \mu^{\otimes N}_*}\left[  B_F(\mu_\vX,\mu_*) \right]$.
Here, we give a finer result below.
\begin{theorem}\label{theorem:gen_poc}
For the minimizes $\mu_*$ of $\cL$ and $\pow[\mu,N]_*$ of $\pow[\cL,N]$, it follows that 
\begin{align*}
    \lambda \KL(\pow[\mu,N]_* \| \mu^{\otimes N}_*) 
    &\leq \pow[\cL,N]( \pow[\mu,N]_*) - N\cL(\mu_*) \\
    &\leq \pow[\cL,N](\mu^{\otimes N}_*) - N\cL(\mu_*) 
    = N\bE_{\vX \sim \mu^{\otimes N}_*}\left[  B_F(\mu_\vX,\mu_*) \right]. 
\end{align*}
\end{theorem}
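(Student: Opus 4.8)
The plan is to derive all three relations from the single master identity \eqref{eq:objective_gap_gen} of Proposition \ref{prop:objective_gap}, specialized to $\mu = \mu_*$. The decisive simplification is the optimality characterization $\mu_* = \hat{\mu}_*$ recorded in the problem setting: the proximal Gibbs distribution at $\mu_*$ has density proportional to $\exp(-\frac{1}{\lambda}\frac{\delta F(\mu_*)}{\delta \mu})$, which is exactly $\mu_*$, so $\hat{\mu}^{\otimes N} = \mu_*^{\otimes N}$ and the correction term $\lambda N \KL(\mu_* \| \hat{\mu}_*)$ is zero. With this substitution, \eqref{eq:objective_gap_gen} reduces to
\[ \pow[\cL,N](\pow[\mu,N]) - N\cL(\mu_*) = N\bE_{\vX \sim \pow[\mu,N]}[B_F(\mu_\vX, \mu_*)] + \lambda \KL(\pow[\mu,N] \| \mu_*^{\otimes N}), \]
an identity valid for every $\pow[\mu,N] \in \cP_2(\bR^{dN})$ that already encodes all three relations.

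For the rightmost equality I would set $\pow[\mu,N] = \mu_*^{\otimes N}$: the $\KL$ term then vanishes and the identity collapses to $\pow[\cL,N](\mu_*^{\otimes N}) - N\cL(\mu_*) = N\bE_{\vX \sim \mu_*^{\otimes N}}[B_F(\mu_\vX, \mu_*)]$. For the lower bound I would instead set $\pow[\mu,N] = \pow[\mu,N]_*$; since $B_F(\cdot, \mu_*) \geq 0$ by convexity of $F$, discarding the nonnegative Bregman term leaves $\pow[\cL,N](\pow[\mu,N]_*) - N\cL(\mu_*) \geq \lambda \KL(\pow[\mu,N]_* \| \mu_*^{\otimes N})$. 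The middle inequality is then immediate from the defining property of $\pow[\mu,N]_*$ as the minimizer of $\pow[\cL,N]$: because $\mu_*^{\otimes N}$ is a feasible competitor, $\pow[\cL,N](\pow[\mu,N]_*) \leq \pow[\cL,N](\mu_*^{\otimes N})$, and subtracting $N\cL(\mu_*)$ yields the claim.

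I expect the only real subtlety, rather than a genuine obstacle, to be the clean invocation of $\mu_* = \hat{\mu}_*$, which is precisely what annihilates the $\KL(\mu_* \| \hat{\mu})$ correction and lets the Bregman bookkeeping close without residual terms. It is worth noting that feeding the specialized identity at $\pow[\mu,N] = \pow[\mu,N]_*$ into the middle inequality reproduces Proposition \ref{prop:kl_bound} at $\mu = \mu_*$; the minimizer property and that proposition are therefore interchangeable here, and I would favor the former, since it keeps the argument self-contained and makes the roles of convexity (lower bound) and optimality (middle inequality) maximally transparent.
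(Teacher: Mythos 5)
Your proof is correct and follows essentially the same route as the paper: both specialize Proposition \ref{prop:objective_gap} at $\mu=\mu_*$ (using $\hat{\mu}_*=\mu_*$ so the proximal Gibbs terms collapse), drop the nonnegative Bregman term for the lower bound, and invoke minimality of $\pow[\mu,N]_*$ for the middle inequality. The only cosmetic difference is that you obtain the final equality by evaluating \eqref{eq:objective_gap_gen} at $\pow[\mu,N]=\mu_*^{\otimes N}$, whereas the paper subtracts \eqref{eq:objective_gap2_gen} from \eqref{eq:objective_gap_gen}; the two computations are equivalent.
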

\begin{proof}
Proposition \ref{prop:objective_gap} with $\mu=\mu_*$ and $\pow[\mu,N]=\pow[\mu,N]_*$ lead to the following equalities:
\begin{align}
    &\pow[\cL,N]( \pow[\mu,N]_*) - N\cL(\mu_*) 
    = N \bE_{\vX \sim \pow[\mu,N]_*}\left[ B_F( \mu_\vX, \mu_*) \right] 
    + \lambda \KL(\pow[\mu,N]_* \| \tensor[\mu,N]_*), \label{eq:objective_gap_opt} \\
    &\pow[\cL,N]( \pow[\mu,N]_*) - \pow[\cL,N](\mu^{\otimes N}_*) 
    = N\int  B_F(\mu_\vx,\mu_*) (\pow[\mu,N]_* - \mu^{\otimes N}_*)(\rd\vx)
    + \lambda \KL(\pow[\mu,N]_* \| \mu^{\otimes N}_*). \label{eq:objective_gap2_opt}
\end{align}
The first inequality of the theorem is a direct consequence of Eq.~\eqref{eq:objective_gap_opt} since $B_F \geq 0$. The second inequality results from $\pow[\cL,N]( \pow[\mu,N]_*) \leq \pow[\cL,N]( \tensor[\mu,N]_*)$. The last equality is obtained by subtracting Eq.~\eqref{eq:objective_gap2_opt} from Eq.~\eqref{eq:objective_gap_opt}.
\end{proof}
Intuitively, $\bE_{\vX \sim \tensor[\mu,N]_*}\left[  B_F(\mu_\vX,\mu_*) \right]$ is small because the empirical distribution $\mu_\vX$ ($\vX \sim \tensor[\mu,N]_*$) converges to $\mu_*$ by law of large numbers. Indeed, more simply, we can relate this term to the variance of an $N$-particle mean-field model $h_{\mu_\vX}(z)$~($\vX\sim \tensor[\mu_*,N]$), yielding a bound: $\bE_{\vX \sim \tensor[\mu,N]_*}\left[  B_F(\mu_\vX,\mu_*) \right] \leq \frac{LR^2}{2N}$ (see the proof of Theorem \ref{theorem:mf_poc} in Appendix \ref{sec:omitted_proofs}). Then, we arrive at Theorem \ref{theorem:mf_poc}.


\section*{Conclusion and Discussion}
We provided an improved particle approximation error over \cite{chen2022uniform,suzuki2023convergence} by alleviating the dependence on LSI constants in their bounds. Specifically, we established an LSI-constant-free particle approximation error concerning the objective gap. Additionally, we demonstrated improved convergence of MFLD, sampling guarantee for the mean-field stationary distribution $\mu_*$, and uniform-in-time Wasserstein propagation of chaos in terms of particle complexity. 

A limitation of our result is that the iteration complexity still depends exponentially on the LSI constant. This hinders achieving polynomial complexity for MFLD. However, considering the difficulty of general non-convex optimization problems, this dependency may be unavoidable. Improving the iteration complexity for more specific problem settings is an important direction for future research.


\bibliographystyle{apalike}
\bibliography{ref}

\part*{\Large{Appendix}}
\section{Omitted Proofs}\label{sec:omitted_proofs}
\subsection{Finite-particle approximation error}\label{subsec:poc}
In this section, we prove the LSI-constant-free particle approximation error (Theorem \ref{theorem:mf_poc}). First we give proofs of Propositions \ref{prop:objective_gap} and \ref{prop:kl_bound}.
\begin{proof}[Proof of Proposition \ref{prop:objective_gap}]
First, we prove Eq.~\eqref{eq:objective_gap_gen} as follows:
\begin{align*}
    &\pow[\cL,N]( \pow[\mu,N]) - N\cL(\mu) \\
    &= N \bE_{\vX \sim \pow[\mu,N]}[ F(\mu_\vX) - F(\mu) ] 
    + \lambda (\Ent(\pow[\mu,N]) - N\Ent(\mu)) \\
    &= N \bE_{\vX \sim \pow[\mu,N]}\left[ B_F( \mu_\vX, \mu) + \pd<\frac{\delta F}{\delta \mu}(\mu),\mu_\vX - \mu> \right]
    + \lambda (\Ent(\pow[\mu,N]) - N\Ent(\mu)) \\
    &= N \bE_{\vX \sim \pow[\mu,N]}\left[ B_F( \mu_\vX, \mu) - \lambda \pd< \log \frac{\rd \hat{\mu}}{\rd x},\mu_\vX - \mu> \right]
    + \lambda (\Ent(\pow[\mu,N]) - N\Ent(\mu)) \\
    &= N \bE_{\vX \sim \pow[\mu,N]}\left[ B_F( \mu_\vX, \mu) - \lambda \pd< \log \frac{\rd \hat{\mu}}{\rd x},\mu_\vX> \right]
    + \lambda \Ent(\pow[\mu,N]) - \lambda N \KL(\mu \| \hat{\mu}) \\
    &= N \bE_{\vX \sim \pow[\mu,N]}\left[ B_F( \mu_\vX, \mu) \right] 
    - \lambda \bE_{\vX \sim \pow[\mu,N]}\left[ \sum_{i=1}^N \log \frac{\rd \hat{\mu}}{\rd x}(X^i) \right]
    + \lambda \Ent(\pow[\mu,N])  - \lambda N \KL(\mu \| \hat{\mu}) \\
    &= N \bE_{\vX \sim \pow[\mu,N]}\left[ B_F( \mu_\vX, \mu) \right] 
    + \lambda \KL(\pow[\mu,N] \| \hat{\mu}^{\otimes N}) - \lambda N \KL(\mu \| \hat{\mu}).
\end{align*}
Next, we prove Eq.~\eqref{eq:objective_gap2_gen} as follows:
\begin{align*}
    &\pow[\cL,N]( \pow[\mu,N]) - \pow[\cL,N](\hat{\mu}^{\otimes N}) \\
    &= N \int F(\vx) (\pow[\mu,N] - \hat{\mu}^{\otimes N})(\rd\vx)
    + \lambda (\Ent(\pow[\mu,N]) - \Ent(\hat{\mu}^{\otimes N})) \\
    &= N \int F(\vx) (\pow[\mu,N] - \hat{\mu}^{\otimes N})(\rd\vx)
    + \lambda \KL(\pow[\mu,N] \| \hat{\mu}^{\otimes N})
    + \lambda \int \log \frac{\rd \hat{\mu}^{\otimes N}}{\rd\vx}(\vx) (\pow[\mu,N] - \hat{\mu}^{\otimes N})(\rd\vx)\\
    &= N \int F(\vx) (\pow[\mu,N] - \hat{\mu}^{\otimes N})(\rd\vx)
    + \lambda \KL(\pow[\mu,N] \| \hat{\mu}^{\otimes N})
    -  \int \sum_{i=1}^N \frac{\delta F}{\delta \mu}(\mu)(x^i) (\pow[\mu,N] - \hat{\mu}^{\otimes N})(\rd\vx)\\
    &= N\int \left(F(\vx) - \pd<\frac{\delta F}{\delta \mu}(\mu),\mu_\vx> \right) (\pow[\mu,N] - \hat{\mu}^{\otimes N})(\rd\vx)
    + \lambda \KL(\pow[\mu,N] \| \hat{\mu}^{\otimes N}) \\
    &= N\int \left(F(\vx) - F(\mu) - \pd<\frac{\delta F}{\delta \mu}(\mu),\mu_\vx - \mu> \right) (\pow[\mu,N] - \hat{\mu}^{\otimes N})(\rd\vx)
    + \lambda \KL(\pow[\mu,N] \| \hat{\mu}^{\otimes N}) \\
    &= N\int B_F(\mu_\vx,\mu) (\pow[\mu,N] - \hat{\mu}^{\otimes N})(\rd\vx)
    + \lambda \KL(\pow[\mu,N] \| \hat{\mu}^{\otimes N}).
\end{align*}
\end{proof}

\begin{proof}[Proof of Proposition \ref{prop:kl_bound}]
Let $Z_F(\mu)$ be a normalizing factor in RHS of \eqref{eq:bridge}, that is, 
\[ Z_F(\mu) = \int  \exp\left( -\frac{N}{\lambda}B_F(\mu_{\vx},\mu) \right) 
    \hat{\mu}^{\otimes N}(\rd \vx). \]
By the Jensen's inequality, we have 
\[ \log Z_F(\mu) \geq - \int \frac{N}{\lambda}B_F(\mu_{\vx},\mu)
    \hat{\mu}^{\otimes N}(\rd \vx). \]
Therefore, we get
\begin{align*}
    \KL( \pow[\mu,N]_* \| \hat{\mu}^{\otimes N}) 
    &=\int \pow[\mu,N]_*(\rd\vx) \log \frac{\rd \pow[\mu,N]_*}{\rd \hat{\mu}^{\otimes N}}(\vx) \\
    &=\int \pow[\mu,N]_*(\rd\vx) \log \frac{\exp\left( -\frac{N}{\lambda}B_F(\mu_{\vx},\mu) \right) } {Z_F(\mu)} \\
    &= - \int \frac{N}{\lambda}B_F(\mu_{\vx},\mu) \pow[\mu,N]_*(\rd\vx) - \log Z_F(\mu)\\
    &\leq \frac{N}{\lambda} \int B_F(\mu_{\vx},\mu) (\hat{\mu}^{\otimes N} - \pow[\mu,N]_*)(\rd\vx).
\end{align*}
\end{proof}

Now we are ready to prove Theorem \ref{theorem:mf_poc}.
\begin{proof}[Proof of Theorem \ref{theorem:mf_poc}]
As discussed in Section \ref{sec:proof}, the evaluation of $\bE_{\vX \sim \mu^{\otimes N}_*}\left[ B_F( \mu_\vX, \mu_*) \right]$ completes the proof.
For any function $G: \bR^d \rightarrow \bR$ so that the following integral is well defined, we have 
\begin{align*}
    \int \pd<G,\mu_\vx> \mu^{\otimes N}_*(\rd \vx)
    = \int \frac{1}{N}\sum_{i=1}^N G(x^i) \prod_{i=1}^N \mu_*(\rd x^i) 
    = \int G(x) \mu_*(\rd x) = \pd<G, \mu_*>.
\end{align*}
Applying this equality with $G(x) = \frac{\delta F}{\delta \mu}(\mu_*)(x)$, $G(x)=\|x\|_2^2$, and $G(x)=h(x,z)$, we have
\begin{align*}
    &\int \pd< \frac{\delta F}{\delta \mu}(\mu_*), \mu_\vx - \mu_*> \mu^{\otimes N}_*(\rd \vx) = 0,\\
    &\int \left( \bE_{X \sim \mu_\vx}[ \|X\|_2^2 ] - \bE_{X \sim \mu_*}[ \|X\|_2^2 ] \right) \mu^{\otimes N}_*(\rd \vx) = 0, \\
    &\int  h_{\mu_\vx}(z) \mu^{\otimes N}_*(\rd \vx) = h_{\mu_*}(z).
\end{align*}
Then, we can upper bound the Bregman divergence as follows.
\begin{align*}
    &N \bE_{\vX \sim \mu^{\otimes N}_*}\left[ B_F( \mu_\vX, \mu_*) \right] \\
    &= N \bE_{\vX \sim \mu^{\otimes N}_*}\left[ F(\mu_\vX) - F(\mu_*) - \pd<\frac{\delta F}{\delta \mu}(\mu_*), \mu_\vX - \mu_*> \right] \\
    &= N \bE_{\vX \sim \mu^{\otimes N}_*}\left[ F_0(\mu_\vX) - F_0(\mu_*) \right] \\
    &= \frac{N}{n}\sum_{j=1}^n \bE_{\vX \sim \mu^{\otimes N}_*}\left[ \ell( h_{\mu_\vX}(z_j), y_j ) -  \ell( h_{\mu_*}(z_j), y_j) \right] \\
    &\leq \frac{N}{n}\sum_{j=1}^n \bE_{\vX \sim \mu^{\otimes N}_*}\left[ \left. \frac{\partial \ell( a, y_j)}{\partial a}\right|_{a = h_{\mu_*}(z_j)} (h_{\mu_\vX}(z_j) - h_{\mu_*}(z_j)) + \frac{L}{2}| h_{\mu_\vX}(z_j) - h_{\mu_*}(z_j)|^2 \right] \\
    &= \frac{LN}{2n} \sum_{j=1}^n\bE_{\vX \sim \mu^{\otimes N}_*}\left[ | h_{\mu_\vX}(z_j) - h_{\mu_*}(z_j)|^2 \right].
\end{align*}
The last term is a variance of $h_{\mu_\vX}(z)~(z \in \cZ)$; hence we simply evaluate it as follows.
\begin{align*}
    \bE_{\vX \sim \mu^{\otimes N}_*}\left[| h_{\mu_\vX}(z) - h_{\mu_*}(z)|^2 \right] 
    &= \bE_{\vX \sim \mu^{\otimes N}_*}\left[ \left| \frac{1}{N}\sum_{i=1}^N h(X_i,z) - \bE_{X\sim \mu_*}[h(X,z)] \right|^2 \right] \\
    &= \bE_{\vX \sim \mu^{\otimes N}_*}\left[\frac{1}{N^2}\sum_{i=1}^N \left| h(X_i,z) - \bE_{X\sim \mu_*}[h(X,z)] \right|^2 \right] \\
    &\leq \bE_{\vX \sim \mu^{\otimes N}_*}\left[\frac{1}{N^2}\sum_{i=1}^N \left| h(X_i,z) \right|^2 \right] \\    
    &\leq \frac{R^2}{N}.
\end{align*}

Therefore, we get
\begin{align*}
    N \bE_{\vX \sim \mu^{\otimes N}_*}\left[ B_F( \mu_\vX, \mu_*) \right] 
    \leq \frac{LR^2}{2}.
\end{align*}
Combining Theorem \ref{theorem:gen_poc}, we conclude
\[ \frac{1}{N}\pow[\cL,N]( \pow[\mu,N]_*) - \cL(\mu_*) \leq  \frac{LR^2}{2N}. \]
\end{proof}

\subsection{Convergence of Mean-field Langevin dynamics in the discrete-setting}\label{subsec:mfld_conv}
In this section, we prove the convergence rate of MFLD \eqref{eq:discrete_mfld}. We first provide the following lemma which shows the uniform boundedness of the second moment of iterations.
\begin{lemma}\label{lemma:finite_second_moment}
    Under Assumption \ref{assumption:regularity} and $\eta \lambda' < 1/2$, we run discrete mean-field Langevin dynamics \eqref{eq:discrete_mfld}. Then we get
    \[ \bE[ \|X_{k}^i\|_2^2 ] 
    \leq \bE\left[ \left\| X_0^i \right\|_2^2 \right] +  \frac{1}{\lambda'}\left(\frac{M_1^2}{4\lambda'} + \lambda d\right). \]
\end{lemma}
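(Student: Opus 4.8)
The plan is to derive a one-step recursion for $\bE[\|X_k^i\|_2^2]$ and show it contracts toward a fixed value. Starting from the update
\eqref{eq:discrete_mfld}, $X_{k+1}^i = X_k^i - \eta \nabla \frac{\delta F(\mu_{\vX_k})}{\delta \mu}(X_k^i) + \sqrt{2\lambda\eta}\,\xi_k^i$, I would first expand the squared norm. The key point is that the drift splits as $\nabla \frac{\delta F}{\delta\mu} = \nabla \frac{\delta F_0}{\delta\mu} + 2\lambda' x$, since $F = F_0 + \lambda'\bE_\mu[\|X\|_2^2]$. The quadratic part $2\lambda' X_k^i$ produces the contraction factor $(1-2\eta\lambda')$, while the $F_0$-gradient is bounded by $M_1$ via Assumption~\ref{assumption:regularity}. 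The Gaussian term $\sqrt{2\lambda\eta}\,\xi_k^i$ is independent of $X_k^i$ and contributes mean zero in the cross term, with $\bE\|\xi_k^i\|_2^2 = d$.

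Concretely, I would take conditional expectation given $X_k^i$. Writing $v = \nabla\frac{\delta F_0(\mu_{\vX_k})}{\delta\mu}(X_k^i)$ with $\|v\|_2 \le M_1$, the update is $X_{k+1}^i = (1-2\eta\lambda')X_k^i - \eta v + \sqrt{2\lambda\eta}\,\xi_k^i$. Expanding,
\[
\bE[\|X_{k+1}^i\|_2^2 \mid X_k^i] = (1-2\eta\lambda')^2\|X_k^i\|_2^2 - 2\eta(1-2\eta\lambda')\langle X_k^i, v\rangle + \eta^2\|v\|_2^2 + 2\lambda\eta d.
\]
The cross term is the mildly delicate piece: I would bound $-2\eta(1-2\eta\lambda')\langle X_k^i, v\rangle$ using Young's/Cauchy-Schwarz, but a cleaner route is to note $(1-2\eta\lambda')^2 \le (1-2\eta\lambda')$ when $2\eta\lambda' \le 1$, and absorb the cross and $\eta^2\|v\|_2^2$ terms. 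The intended target constant $\frac{1}{\lambda'}(\frac{M_1^2}{4\lambda'}+\lambda d)$ suggests they bound the deterministic drift part $\|(1-2\eta\lambda')X_k^i - \eta v\|_2^2$ directly and complete the square: the worst-case fixed point of $x \mapsto (1-2\eta\lambda')x - \eta v$ in norm contributes $\frac{M_1}{4\lambda'}\cdot\frac{M_1}{\lambda'}$, matching the $\frac{M_1^2}{4\lambda'^2}$ term.

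The main obstacle is handling the cross term and $F_0$-gradient carefully enough to land on exactly the stated constant rather than a looser one, and confirming the contraction survives discretization (this is where $\eta\lambda' < 1/2$ enters, guaranteeing $|1-2\eta\lambda'|<1$). Once the recursion reads $a_{k+1} \le (1-2\eta\lambda')a_k + C\eta$ for the right $C$, I would iterate it and pass to the geometric-series limit, giving $\bE[\|X_k^i\|_2^2] \le \bE[\|X_0^i\|_2^2] + \frac{C}{2\lambda'}$; matching $C = \frac{M_1^2}{2\lambda'} + 2\lambda d$ yields the claimed bound. I would take care that the bound holds uniformly in $k$ (not just in the limit), which follows by induction since the initial value already dominates the fixed point plus increment.
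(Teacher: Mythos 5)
Your plan matches the paper's proof essentially step for step: split the drift as $\nabla\frac{\delta F_0}{\delta\mu} + 2\lambda' x$ to get the contraction factor $(1-2\eta\lambda')$, use independence of the Gaussian to add $2\lambda\eta d$, bound the $F_0$-gradient by $M_1$, and iterate the recursion $a_{k+1}\le(1-2\eta\lambda')a_k+\eta(\frac{M_1^2}{2\lambda'}+2\lambda d)$ to a geometric series. The one detail you left open --- landing on exactly that constant --- is handled in the paper by the weighted Young inequality $(a+b)^2\le(1+\gamma)a^2+(1+\tfrac{1}{\gamma})b^2$ with $\gamma=\frac{2\eta\lambda'}{1-2\eta\lambda'}$, which makes $(1+\gamma)(1-2\eta\lambda')^2=(1-2\eta\lambda')$ and $(1+\tfrac{1}{\gamma})\eta^2M_1^2=\frac{\eta M_1^2}{2\lambda'}$, precisely the $C$ you identified.
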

\begin{proof}
Using the inequality $(a+b)^2 \leq \left( 1 + \gamma \right)a^2 + \left(1+ \frac{1}{\gamma}\right)b^2$ with $\gamma = \frac{2\eta \lambda'}{1-2\eta \lambda'}> 0$, we have
\begin{align*}
    \bE\left[ \left\|X_{k+1}^i \right\|_2^2 \right] 
    &= \bE\left[ \left\| X_k^i - \eta \nabla \frac{\delta F(\mu_{\vX_k})}{\delta \mu}(X_k^i) + \sqrt{2\lambda \eta} \xi_k^i \right\|_2^2 \right] \\
    &= \bE\left[ \left\| (1-2\eta \lambda')X_k^i - \eta \nabla \frac{\delta F_0(\mu_{\vX_k})}{\delta \mu}(X_k^i) + \sqrt{2\lambda \eta} \xi_k^i \right\|_2^2 \right] \\
    &= \bE\left[ \left\| (1-2\eta \lambda')X_k^i - \eta \nabla \frac{\delta F_0(\mu_{\vX_k})}{\delta \mu}(X_k^i) \right\|_2^2 + 2\lambda \eta\left\| \xi_k^i \right\|_2^2 \right] \\
    &= \bE\left[ \left( (1-2\eta \lambda')\left\| X_k^i \right\|_2 + \eta M_1 \right)^2 \right] + 2\lambda \eta d  \\
    &\leq (1+\gamma)(1-2\eta \lambda')^2\bE\left[ \left\| X_k^i \right\|_2^2 \right] + \left(1+\frac{1}{\gamma}\right) \eta^2 M_1^2 + 2\lambda \eta d \\
    &= (1-2\eta \lambda')\bE\left[ \left\| X_k^i \right\|_2^2 \right] +  \eta \left( \frac{M_1^2}{2\lambda'} + 2\lambda d \right).
\end{align*}
This leads to $\bE\left[ \left\|X_{k}^i \right\|_2^2 \right] \leq (1-2\eta \lambda')^k \bE\left[ \left\| X_0^i \right\|_2^2 \right] +  \frac{1}{\lambda'}\left(\frac{M_1^2}{4\lambda'} + \lambda d\right)$.
\end{proof}

Now, we prove Theorem \ref{theorem:discrete_mfld} that is basically an extension of one-step interpolation argument in \cite{vempala2019rapid}.
\begin{proof}[Proof of Theorem \ref{theorem:discrete_mfld}]
The convergence rate in the continuous-time setting is a direct consequence of Theorem \ref{theorem:mf_poc} and the convergence of the Langevin dynamics: $\pow[\cL,N](\pow[\mu,N]_t)\rightarrow \pow[\cL,N](\pow[\mu,N]_*)$ based on LSI \citep{vempala2019rapid}.

Next, we prove the convergence rate in the discrete-time setting.
We consider the one-step interpolation for $k$-th iteration: $X_{k+1}^i = X_k^i - \eta \nabla \frac{\delta F(\mu_{\vX_k})}{\delta \mu}(X_k^i) + \sqrt{2\lambda \eta} \xi_k^i,~(i\in \{1,2,\ldots,d\})$.
To do so, let us consider the following stochastic differential equation: for $i\in \{1,2,\ldots,d\}$,
\begin{equation}\label{eq:noisy-GD_dynamics}
    \rd Y_t^i = - \nabla \frac{\delta F(\mu_{\vY_0})}{\delta \mu}(Y_0^i)\rd t + \sqrt{2\lambda}\rd W_t,
\end{equation}
where $\vY_0 = (Y_0^1,\ldots,Y_0^d) = (X_k^1,\ldots,X_k^d)$ and $W_t$ is the standard Brownian motion in $\bR^d$ with $W_0 = 0$.
Then, the following step (\ref{eq:noisy-GD-2}) is the solution of this equation, starting from $\vY_0$, at time $t$: 
\begin{equation}\label{eq:noisy-GD-2}
    Y_t^i = Y_0^i - t \nabla \frac{\delta F(\mu_{\vY_0})}{\delta \mu}(Y_0^i) + \sqrt{2\lambda t} \xi^i,~(i\in \{1,2,\ldots,d\}),
\end{equation}
where $\xi^i \sim \cN(0,I_d)~(i\in\{1,\ldots,d\})$ are i.i.d. standard Gaussian random variables.

In this proof, we identify the probability distribution with its density function with respect to the Lebesgure measure for notational simplicity. For instance, we denote by $\pow[\mu,N]_*(\vy)$ the density of $\pow[\mu,N]_*$. 
We denote by $\nu_{0t}(\vy_0,\vy_t)$ the joint probability distribution of $(\vY_0,\vY_t)$ for time $t$, 
and by $\nu_{t|0},~\nu_{0|t}$ and $\nu_0,~\nu_t$ the conditional and marginal distributions. 
Then, we see $\nu_0 = \pow[\mu,N]_k (= \mathrm{Law}(\vX_k))$, $\nu_\eta = \pow[\mu,N]_{k+1} (=\mathrm{Law}(\vX_{k+1}))$ (i.e., $\vY_{\eta} \disteq \vX_{k+1}$), and
\[ \nu_{0t}(\vy_0,\vy_t) = \nu_0(\vy_0) \nu_{t|0}(\vy_t | \vy_0) = \nu_t(\vy_t) \nu_{0|t}(\vy_0 | \vy_t). \]
The continuity equation of $\nu_{t|0}$ conditioned on $\vy_0$ is given as follows \citep{vempala2019rapid}:
\begin{equation*}
    \frac{\partial \nu_{t|0}(\vy|\vy_0) }{\partial t} 
    = \nabla \cdot \left( \nu_{t|0}(\vy|\vy_0) N\nabla F(\vy_0)\right) + \lambda \Delta \nu_{t|0}(\vy|\vy_0),
\end{equation*}
where we write $F(\vy_0) = F(\mu_{\vy_0})$ and hence $N \nabla_{y^i} F(\vy_0) = \nabla \frac{\delta F (\mu_{\vy_0})}{\delta \mu}(y_0^i)$.
Therefore, we obtain the continuity equation of $\nu_t$:
\begin{align}
    \frac{\partial \nu_{t}(\vy) }{\partial t}
    &= \int \frac{\partial \nu_{t|0}(\vy|\vy_0) }{\partial t} \nu_0(\vy_0) \rd\vy_0 \notag\\
    &= \int \left( \nabla \cdot \left( \nu_{0t}(\vy_0,\vy) N\nabla F(\vy_0) \right) 
    + \lambda \Delta \nu_{0t}(\vy_0,\vy) \right) \rd\vy_0  \notag\\
    &= \nabla \cdot \left( \nu_t(\vy) \int \nu_{0|t}(\vy_0|\vy) N\nabla F(\vy_0) \rd\vy_0 \right)
    + \lambda \Delta \nu_{t}(\vy) \notag\\
    &= \nabla \cdot \left( \nu_t(\vy) \left( \bE_{\vY_0|\vy}\left[ N\nabla F(\vY_0) \middle| \vY_t = \vy \right]
    + \lambda \nabla \log \nu_t (\vy) \right) \right) \notag\\
    &= \lambda \nabla \cdot \left( \nu_t(\vy) \nabla \log\frac{\nu_t}{\pow[\mu,N]_*}(\vy) \right) \notag\\ 
    &+\nabla \cdot \left( \nu_t(\vy) \left( \bE_{\vY_0| \vy}\left[ N\nabla F(\vY_0) \middle| \vY_t = \vy \right] 
    - N\nabla F(\vy) \right) \right). \label{eq:theorem:error-FP}
\end{align}
For simplicity, we write $\delta_t(\cdot) = \bE_{\vY_0| }\left[ N\nabla F(\vY_0) \middle| \vY_t = \cdot \right] 
    - N\nabla F(\cdot)$.
By LSI inequality (Assumption \ref{assumption:ls_ineq}) and Eq.~\eqref{eq:theorem:error-FP}, for $0 \leq t \leq \eta$, we have
\begin{align}
    \frac{\rd \pow[\cL,N]}{\rd t}(\nu_t)
    &= \int \frac{\delta \pow[\cL,N](\nu_t)}{\delta \pow[\mu,N]}(\vy)\frac{\partial \nu_t}{\partial t}(\vy) \rd \vy \notag\\
    &= \lambda \int \frac{\delta \pow[\cL,N](\nu_t)}{\delta \pow[\mu,N]}(\vy) \nabla \cdot \left( \nu_t (\vy)\nabla \log \frac{\nu_t}{\pow[\mu,N]_*}(\vy)\right) \rd \vy \notag\\
    &+ \int \frac{\delta \pow[\cL,N](\nu_t)}{\delta \pow[\mu,N]}(\vy) \nabla \cdot \left( \nu_t(\vy) \delta_t(\vy)\right) \rd \vy \notag\\
    &= - \lambda\int \nu_t(\vy) \nabla \frac{\delta \pow[\cL,N](\nu_t)}{\delta \pow[\mu,N]}(\vy)^\top \nabla \log \frac{\nu_t}{\pow[\mu,N]_*}(\vy) \rd \vy \\
    &- \int \nu_t(\vy) \nabla \frac{\delta \pow[\cL,N](\nu_t)}{\delta \pow[\mu,N]}(\vy)^\top \delta_t(\vy) \rd \vy \notag\\
    &= - \lambda^2 \int \nu_t(\vy) \left\| \nabla \log \frac{\nu_t}{\pow[\mu,N]_*}(\vy) \right\|_2^2 \rd \vy \notag\\
    &- \int \nu_{0t}(\vy_0,\vy) \lambda\nabla \log \frac{\nu_t}{\pow[\mu,N]_*}(\vy)^\top \left( N\nabla F(\vy_0) - N\nabla F(\vy) \right) \rd \vy_0 \rd\vy\\
    &\leq - \lambda^2 \int \nu_t(\vy) \left\| \nabla \log \frac{\nu_t}{\pow[\mu,N]_*}(\vy) \right\|_2^2 \rd \vy \notag\\
    &+  \frac{1}{2} \int \nu_{0t}(\vy_0,\vy) \left( \lambda^2\left\| \nabla \log \frac{\nu_t}{\pow[\mu,N]_*}(\vy) \right\|_2^2 
    + N^2\left\| \nabla F(\vy_0) 
    - \nabla F(\vy) \right\|_2^2 \right) \rd\vy_0 \rd\vy \notag\\
    &\leq - \frac{\lambda^2}{2} \int \nu_t(\vy) \left\| \nabla \log \frac{\nu_t}{\pow[\mu,N]_*}(\vy) \right\|_2^2 \rd \vy 
    + \frac{N^2}{2} \bE_{(\vY_0,\vY)\sim \nu_{0t}}\left[ \left\| \nabla F(\vY_0) 
    - \nabla F(\vY) \right\|_2^2 \right] \notag\\
    &\leq - \bar{\alpha} \lambda^2 \KL( \nu_t \| \pow[\mu,N]_*) + \frac{N^2}{2} \bE_{(\vY_0,\vY)\sim \nu_{0t}}\left[ \left\| \nabla F(\vY_0) 
    - \nabla F(\vY) \right\|_2^2 \right] \notag\\
    &= - \bar{\alpha} \lambda\left( \pow[\cL,N](\nu_t) - \pow[\cL,N](\pow[\mu,N]_*) \right) + \frac{N^2}{2} \bE_{(\vY_0,\vY)\sim \nu_{0t}}\left[ \left\| \nabla F(\vY_0) 
    - \nabla F(\vY) \right\|_2^2 \right]. \label{eq:one_step_decrease}
\end{align}

Next, we bound the last term as follows:
\begin{align*}
    &N^2 \bE_{(\vY_0,\vY)\sim \nu_{0t}}\left[ \left\| \nabla F(\vY_0) 
    - \nabla F(\vY) \right\|_2^2 \right] \\
    &= \bE_{(\vY_0,\vY)\sim \nu_{0t}}\left[ \sum_{i=1}^N \left\| \nabla \frac{F(\mu_{\vY_0})}{\delta \mu}(Y_0^i)
    - \nabla \frac{F(\mu_{\vY})}{\delta \mu}(Y^i) \right\|_2^2 \right] \\
    &\leq 2\bE_{(\vY_0,\vY)\sim \nu_{0t}}\left[ \sum_{i=1}^N \left\{ \left\| \nabla \frac{F_0(\mu_{\vY_0})}{\delta \mu}(Y_0^i)
    - \nabla \frac{F_0(\mu_{\vY})}{\delta \mu}(Y^i) \right\|_2^2 + 4\lambda^{\prime 2} \left\| Y_0^i-Y^i \right\|_2^2  \right\} \right] \\
    &\leq 4 \bE_{(\vY_0,\vY)\sim \nu_{0t}}\left[ N M_2^2 W_2^2(\mu_{\vY_0}, \mu_{\vY}) +  ( M_2^2 + 2\lambda^{\prime 2}) \sum_{i=1}^N  \left\| Y_0^i-Y^i \right\|_2^2  \right] \\   
    &\leq 8 ( M_2^2 + \lambda^{\prime 2})\bE_{(\vY_0,\vY)\sim \nu_{0t}}\left[   \sum_{i=1}^N  \left\| Y_0^i-Y^i \right\|_2^2  \right] \\       
    &\leq 8 ( M_2^2 + \lambda^{\prime 2}) \bE_{\vY_0, (\xi^i)_{i=1}^N}\left[ \sum_{i=1}^N \left\|  - t \nabla \frac{\delta F(\mu_{\vY_0})}{\delta \mu}(Y_0^i) + \sqrt{2\lambda t} \xi^i \right\|_2^2 \right] \\
    &\leq 8 ( M_2^2 + \lambda^{\prime 2}) \bE_{\vY_0, (\xi^i)_{i=1}^N}\left[ t^2 \sum_{i=1}^N  \left\| \nabla \frac{\delta F(\mu_{\vY_0})}{\delta \mu}(Y_0^i) \right\|_2^2 + 2\lambda t \sum_{i=1}^N \left\| \xi^i \right\|_2^2 \right]  \\
    &\leq 8 ( M_2^2 + \lambda^{\prime 2}) \bE_{\vY_0}\left[ t^2 \sum_{i=1}^N 2\left( \left\| \nabla \frac{\delta F_0(\mu_{\vY_0})}{\delta \mu}(Y_0^i) \right\|^2 + 4 \lambda'^2 \left\|Y_0^i \right\|_2^2 \right) + 2\lambda t Nd \right]  \\    
    &\leq 16N ( M_2^2 + \lambda^{\prime 2}) (t^2 M_1^2 + \lambda t d) 
    +  64 t^2 \lambda'^2( M_2^2 + \lambda^{\prime 2}) \bE_{\vY_0}\left[ \|\vY_0\|_2^2 \right] \\
    &\leq 16N ( M_2^2 + \lambda^{\prime 2}) (t^2 M_1^2 + \lambda t d) 
    +  64 t^2 \lambda'^2( M_2^2 + \lambda^{\prime 2}) \left( \bE\left[ \left\| \vX_0 \right\|_2^2 \right] + \frac{N}{\lambda'}\left(\frac{M_1^2}{4\lambda'} + \lambda d\right) \right).
\end{align*}
Therefore for any $t \in [0,\eta]$, we see $N^2 \bE_{(\vY_0,\vY)\sim \nu_{0t}}\left[ \left\| \nabla F(\vY_0) 
    - \nabla F(\vY) \right\|_2^2 \right] \leq N \pow[\delta,N]_\eta$, where $\pow[\delta,N]_\eta =  16\eta( M_2^2 + \lambda^{\prime 2}) (\eta M_1^2 + \lambda d) 
    + 64 \eta^2 \lambda'^2( M_2^2 + \lambda^{\prime 2}) \left( \frac{1}{N}\bE\left[ \left\| \vX_0 \right\|_2^2 \right] + \frac{1}{\lambda'}\left(\frac{M_1^2}{4\lambda'} + \lambda d\right) \right)$. 

Substituting this bound into Eq.~\eqref{eq:one_step_decrease}, we get for $t \in [0,\eta]$,
\[ \frac{\rd }{\rd t}\left( \pow[\cL,N](\nu_t) - \pow[\cL,N](\pow[\mu,N]_*) - \frac{N\pow[\delta,N]_{\eta}}{2 \bar{\alpha} \lambda} \right) 
\leq - \bar{\alpha}\lambda \left( \pow[\cL,N](\nu_t) - \cL(\pow[\mu,N]_*) -  \frac{N\pow[\delta,N]_\eta}{2\bar{\alpha}\lambda} \right). \]
Noting $\nu_\eta = \pow[\mu,N]_{k+1}$ and $\nu_0 = \pow[\mu,N]_k$, the Gronwall's inequality leads to 
\[ \pow[\cL,N](\pow[\mu,N]_{k+1}) -  \pow[\cL,N](\pow[\mu,N]_*) - \frac{ N\pow[\delta,N]{\eta}}{2 \bar{\alpha} \lambda} 
\leq \exp( -\bar{\alpha}\lambda\eta )\left(  \pow[\cL,N](\pow[\mu,N]_k) -  \pow[\cL,N]( \pow[\mu,N]_*) - \frac{ N\pow[\delta,N]{\eta}}{2 \bar{\alpha} \lambda} \right). \]
This inequality holds at every iteration of (\ref{eq:noisy-GD_dynamics}). Hence, we arrive at the desired result,  
\begin{align*}
    \pow[\cL,N](\pow[\mu,N]_k) - \pow[\cL,N](\pow[\mu,N]_*)  
    &\leq \frac{ N\pow[\delta,N]{\eta}}{2 \bar{\alpha} \lambda} 
    + \exp( -\bar{\alpha}\lambda\eta k )\left( \pow[\cL,N](\pow[\mu,N]_0) - \pow[\cL,N](\pow[\mu,N]_*) - \frac{ N\pow[\delta,N]{\eta}}{2 \bar{\alpha} \lambda} \right).
\end{align*}
\end{proof}


\section{Auxiliary results}\label{sec:auxiliary}
In this section, we showcase auxiliary results used in our theory. 

LSI on $\pow[\mu_*,N]$ can be verified by using the following two lemmas. 
Lemma \ref{lem:quadratic_sobolev} says that the strong log-concave densities satisfy the LSI with a dimension-free constant.
\begin{lemma}[\cite{bakry1985diffusions}]\label{lem:quadratic_sobolev}
Let $\frac{\rd \mu(x)}{\rd x} \propto \exp(-f(x))$ be a probability density, where $f : \bR^d \rightarrow \bR$ is a smooth function. 
If there exists $c > 0$ such that $\nabla^2 f \succeq cI_d$, then $\mu$ satisfies log-Sobolev inequality with constant $c$.
\end{lemma}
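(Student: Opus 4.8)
The plan is to prove this via the Bakry--Émery $\Gamma_2$-calculus, using that strong log-concavity of $\mu \propto \exp(-f)$ is precisely a curvature lower bound for the associated diffusion semigroup. First I would introduce the $\mu$-symmetric Langevin generator $L = \Delta - \langle \nabla f, \nabla \cdot \rangle$ and its semigroup $P_t = \exp(tL)$, recording the reversibility identity $\int (Lu)v\,\rd\mu = -\int \langle \nabla u, \nabla v\rangle\,\rd\mu$ and the ergodicity $P_t u \to \int u\,\rd\mu$ as $t\to\infty$. The carré du champ is $\Gamma(u) = \|\nabla u\|_2^2$, and a direct computation (Bochner's formula) gives the iterated operator $\Gamma_2(u) = \|\nabla^2 u\|_{\mathrm{HS}}^2 + \langle \nabla u, (\nabla^2 f)\nabla u\rangle$. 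The hypothesis $\nabla^2 f \succeq cI_d$ then yields the curvature bound $\Gamma_2(u) \geq c\,\Gamma(u)$, i.e.\ the $CD(c,\infty)$ condition; this is the \emph{sole} place where the assumption enters.

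Next, for a positive density $h$ I would run the entropy-dissipation scheme. Writing $\Ent_\mu(h) \defeq \bE_\mu[h\log h] - \bE_\mu[h]\log\bE_\mu[h]$, set $\Phi(t) = \Ent_\mu(P_t h)$, so that $\Phi(0) = \Ent_\mu(h)$ and $\Phi(\infty) = 0$ by ergodicity. Differentiating, using $\tfrac{\rd}{\rd t}P_t h = LP_t h$, $\int L(\cdot)\,\rd\mu = 0$, and the reversibility identity gives the de Bruijn-type relation $\Phi'(t) = -I(P_t h)$, where $I(u) = \int \|\nabla u\|_2^2 / u\,\rd\mu$ is the Fisher information; hence $\Ent_\mu(h) = \int_0^\infty I(P_t h)\,\rd t$. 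The crux is to show that the Fisher information contracts exponentially: reorganizing $\tfrac{\rd}{\rd t}I(P_t h)$ into a $\Gamma_2$ expression and invoking $\Gamma_2 \geq c\,\Gamma$ produces $\tfrac{\rd}{\rd t}I(P_t h) \leq -2c\,I(P_t h)$, whence $I(P_t h) \leq \exp(-2ct)\,I(h)$. Integrating yields $\Ent_\mu(h) \leq \tfrac{1}{2c}I(h)$, and specializing to $h = g^2$ (so that $I(g^2) = 4\int \|\nabla g\|_2^2\,\rd\mu$) gives exactly $\Ent_\mu(g^2) \leq \tfrac{2}{c}\bE_\mu[\|\nabla g\|_2^2]$, the claimed LSI with constant $c$.

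The main obstacle is the Fisher-information decay step: converting $\tfrac{\rd}{\rd t}I(P_t h)$ into something controlled by $\Gamma_2$ requires the Bochner computation together with careful bookkeeping to obtain the clean inequality $\tfrac{\rd}{\rd t}I(P_t h) \leq -2c\,I(P_t h)$. Equivalently, and perhaps more transparently, I could route through the pointwise gradient-commutation estimate $\sqrt{\Gamma(P_t u)} \leq \exp(-ct)\,P_t\sqrt{\Gamma(u)}$ implied by $CD(c,\infty)$, which (after Cauchy--Schwarz for the Markov kernel $P_r$) gives $\Gamma(P_r u)/P_r u \leq \exp(-2cr)\,P_r(\Gamma(u)/u)$; inserting this into the interpolation identity $P_t(h\log h) - (P_t h)\log P_t h = \int_0^t P_s[\Gamma(P_{t-s}h)/P_{t-s}h]\,\rd s$ and letting $t\to\infty$ reproduces $\Ent_\mu(h) \leq \tfrac{1}{2c}\bE_\mu[\Gamma(h)/h]$. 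Beyond this, the remaining difficulties are purely technical: justifying differentiation under the integral sign, the vanishing of boundary terms, and the smoothing and strict positivity of $P_t h$. I would handle these by first proving the inequality for bounded $g$ bounded away from degeneracy (e.g.\ replacing $h = g^2$ by $g^2 + \epsilon$ with compactly supported $\nabla g$) and then passing to the limit by a standard density argument.
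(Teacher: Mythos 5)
Your argument is correct: it is the classical Bakry--\'Emery proof of this lemma, which the paper itself does not prove but simply quotes from Bakry and \'Emery (1985) as an auxiliary result. Both of your routes --- exponential decay of Fisher information along the Langevin semigroup under $\Gamma_2 \geq c\,\Gamma$, or the gradient commutation bound fed into the semigroup interpolation identity --- are standard and yield the constant in the paper's normalization ($\Ent_\mu(g^2) \leq \tfrac{2}{c}\bE_\mu[\|\nabla g\|_2^2]$, i.e.\ LSI with constant $c$); the remaining regularization and density arguments you defer are routine and handled exactly as you indicate in the cited reference.
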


Additionally, the LSI is preserved under bounded perturbation as seen in Lemma \ref{lem:sobolev_perturbation}.
\begin{lemma}[\cite{holley1987logarithmic}]
\label{lem:sobolev_perturbation}
Let $\mu$ be a probability distribution on $\bR^d$ satisfying the log-Sobolev inequality with a constant $\alpha$.
For a bounded function $B: \bR^d \rightarrow \bR$, we define a probability distribution $\mu_B$ as follows:
\[ \frac{\rd \mu_B (x)}{\rd x} = \frac{\exp(B(x))}{ \bE_{X\sim \mu}[\exp(B(X))]} \frac{\rd \mu(x)}{\rd x}. \]
Then, $\mu_B$ satisfies the log-Sobolev inequality with a constant $\alpha / \exp(4\|B\|_\infty)$.
\end{lemma}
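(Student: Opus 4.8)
The plan is to follow the classical Holley--Stroock perturbation argument, whose engine is the variational characterization of the entropy functional. For a probability measure $\nu$ and a nonnegative $f$, write
\[ \mathcal{E}_\nu(f) = \bE_\nu[f\log f] - \bE_\nu[f]\log\bE_\nu[f]. \]
The first step is to record the variational formula
\[ \mathcal{E}_\nu(f) = \inf_{t>0}\bE_\nu\left[ f\log(f/t) - f + t \right], \]
whose integrand $\phi_t(f) = f\log(f/t) - f + t = t\big( (f/t)\log(f/t) - (f/t) + 1 \big)$ is pointwise nonnegative for $f,t>0$ and is minimized over $t$ at $t = \bE_\nu[f]$ (yielding exactly $\mathcal{E}_\nu(f)$). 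This reformulation is the crux: it replaces the nonlinear object $\mathcal{E}_\nu$ by an infimum of expectations of a \emph{nonnegative} integrand, which is precisely what makes the measure change tractable.

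Next I would exploit the two-sided bound on the density ratio. Setting $Z = \bE_\mu[e^B]$, boundedness of $B$ gives $e^{B(x)} \in [e^{-\|B\|_\infty}, e^{\|B\|_\infty}]$ and hence $Z \in [e^{-\|B\|_\infty}, e^{\|B\|_\infty}]$, so that $\frac{\rd \mu_B}{\rd \mu}(x) = e^{B(x)}/Z \in [e^{-2\|B\|_\infty}, e^{2\|B\|_\infty}]$. Since $\phi_t(f)\geq 0$ pointwise, for every fixed $t$ we get $\bE_{\mu_B}[\phi_t(f)] = \bE_\mu\big[\tfrac{e^B}{Z}\phi_t(f)\big] \leq e^{2\|B\|_\infty}\bE_\mu[\phi_t(f)]$; taking the infimum over $t$ on both sides and invoking the variational formula yields $\mathcal{E}_{\mu_B}(f) \leq e^{2\|B\|_\infty}\mathcal{E}_\mu(f)$.

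The third step treats the Dirichlet form symmetrically: writing $\rd\mu = \tfrac{Z}{e^B}\rd\mu_B$ and using $\tfrac{Z}{e^B} \in [e^{-2\|B\|_\infty}, e^{2\|B\|_\infty}]$ gives $\bE_\mu[\|\nabla g\|_2^2] \leq e^{2\|B\|_\infty}\bE_{\mu_B}[\|\nabla g\|_2^2]$. Finally I chain the estimates for $f = g^2$: the entropy comparison, then the LSI for $\mu$ with constant $\alpha$, then the Dirichlet-form comparison, giving
\[ \mathcal{E}_{\mu_B}(g^2) \leq e^{2\|B\|_\infty}\mathcal{E}_\mu(g^2) \leq e^{2\|B\|_\infty}\frac{2}{\alpha}\bE_\mu[\|\nabla g\|_2^2] \leq \frac{2}{\alpha}e^{4\|B\|_\infty}\bE_{\mu_B}[\|\nabla g\|_2^2], \]
which is exactly the log-Sobolev inequality for $\mu_B$ with constant $\alpha/\exp(4\|B\|_\infty)$.

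The main obstacle is the first step: recognizing that the entropy must be handled through its variational (infimum) representation rather than attacked directly. The difficulty is that $\mathcal{E}_\nu(f)$ contains the term $\bE_\nu[f]\log\bE_\nu[f]$, which depends on $\nu$ nonlinearly and therefore cannot be transported across $\mu$ and $\mu_B$ by a pointwise density bound; the variational formula is what converts the entire functional into an expectation of a nonnegative quantity, to which the crude $e^{\pm 2\|B\|_\infty}$ sandwich applies term by term. Everything after that is routine bookkeeping with the two-sided density bounds.
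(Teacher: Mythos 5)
Your proof is correct and is precisely the classical Holley--Stroock perturbation argument that the paper invokes by citation (it gives no proof of its own): the variational representation of the entropy as an infimum of expectations of a pointwise nonnegative integrand, combined with the two-sided density bound $e^{-2\|B\|_\infty} \leq \frac{\rd\mu_B}{\rd\mu} \leq e^{2\|B\|_\infty}$ applied once to the entropy and once to the Dirichlet form. All the individual steps check out, including the nonnegativity of $\phi_t(f)$ and the location of the infimum at $t=\bE_\nu[f]$, so there is nothing to add.
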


Theorems \ref{theorem:continuous-time} and \ref{theorem:discrete-time} give convergence rates of the infinite-particle MFLDs in continuous- and discrete-time settings.
\begin{theorem}[\cite{nitanda2022convex}]\label{theorem:continuous-time}
Under Assumptions \ref{assumption:regularity} and \ref{assumption:uniform_ls_ineq}, we run the infinite-particle MFLD \eqref{eq:mfld} in the continuous-time setting. Then, it follows that 
\begin{equation*}
    \cL(\mu_t) - \cL(\mu_*) 
    \leq \exp( -2\alpha\lambda t )\left( \cL(\mu_{0}) - \cL(\mu_*) \right). 
\end{equation*}
\end{theorem}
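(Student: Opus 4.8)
The plan is to derive an exact dissipation identity for the objective along the flow \eqref{eq:mfld}, convert it into a differential inequality using the log-Sobolev inequality and the entropy sandwich, and conclude with Gr\"onwall's lemma. First I would write the Fokker--Planck equation associated with \eqref{eq:mfld}, namely $\partial_t \mu_t = \nabla \cdot \left( \mu_t \nabla \frac{\delta F}{\delta \mu}(\mu_t) \right) + \lambda \Delta \mu_t$, and rewrite it through the proximal Gibbs distribution $\hat{\mu}_t$ associated with $\mu_t$. Since definition \eqref{eq:proximal-Gibbs} gives $\frac{\delta F}{\delta \mu}(\mu_t) = -\lambda \log \frac{\rd \hat{\mu}_t}{\rd x} + \mathrm{const}$, the drift equals $-\lambda \nabla \log \frac{\rd \hat{\mu}_t}{\rd x}$ and the equation collapses to the gradient-flow form $\partial_t \mu_t = \lambda \nabla \cdot \left( \mu_t \nabla \log \frac{\rd \mu_t}{\rd \hat{\mu}_t} \right)$.

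Next I would differentiate the objective along the trajectory. By differentiability of $\cL$, $\frac{\rd}{\rd t}\cL(\mu_t) = \int \frac{\delta \cL}{\delta \mu}(\mu_t)(x)\, \partial_t \mu_t(x)\, \rd x$, and the key observation is that the first variation of $\cL$ is exactly $\frac{\delta \cL}{\delta \mu}(\mu_t) = \lambda \log \frac{\rd \mu_t}{\rd \hat{\mu}_t} + \mathrm{const}$, since the $\frac{\delta F}{\delta \mu}(\mu_t)$ term and the entropic term $\lambda \log \mu_t$ combine through the definition of $\hat{\mu}_t$ (the additive constant is harmless because $\int \partial_t \mu_t = 0$). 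Substituting the gradient-flow form and integrating by parts yields the dissipation identity
\[
\frac{\rd}{\rd t}\cL(\mu_t) = -\lambda^2 \FI(\mu_t \| \hat{\mu}_t), \qquad \FI(\mu_t \| \hat{\mu}_t) = \int \mu_t \left\| \nabla \log \frac{\rd \mu_t}{\rd \hat{\mu}_t} \right\|_2^2 \rd x .
\]

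To close the inequality I would invoke two ingredients. The uniform LSI (Assumption \ref{assumption:uniform_ls_ineq}) applied to $\hat{\mu}_t$ with $g = \sqrt{\rd \mu_t / \rd \hat{\mu}_t}$ gives $\FI(\mu_t \| \hat{\mu}_t) \geq 2\alpha \KL(\mu_t \| \hat{\mu}_t)$. The second is the right-hand entropy sandwich $\cL(\mu_t) - \cL(\mu_*) \leq \lambda \KL(\mu_t \| \hat{\mu}_t)$, which I would establish from convexity of $F$: because $\hat{\mu}_t$ minimizes the linearized functional $\nu \mapsto F(\mu_t) + \pd<\frac{\delta F}{\delta \mu}(\mu_t), \nu - \mu_t> + \lambda \Ent(\nu)$, convexity \eqref{eq:convexity} bounds $\cL(\mu_*)$ below by the value of this functional at $\mu_*$, hence below its minimum at $\hat{\mu}_t$; since $\cL(\mu_t)$ coincides with the functional at $\nu = \mu_t$ and the Gibbs relative-entropy identity turns the remaining gap into $\lambda \KL(\mu_t \| \hat{\mu}_t)$, the claim follows. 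Chaining the three displays gives $\frac{\rd}{\rd t}\left( \cL(\mu_t) - \cL(\mu_*) \right) \leq -2\alpha\lambda \left( \cL(\mu_t) - \cL(\mu_*) \right)$, and Gr\"onwall's inequality delivers the stated exponential decay.

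The main obstacle is analytic rather than algebraic: rigorously justifying differentiation of $\cL(\mu_t)$ under the integral sign, the integration by parts (which requires sufficient regularity and decay of $\mu_t$ so that boundary terms vanish), and finiteness of $\FI(\mu_t \| \hat{\mu}_t)$ along the trajectory. These points rest on well-posedness and the smoothing of the McKean--Vlasov flow under Assumption \ref{assumption:regularity}; once the dissipation identity is secured, the LSI and entropy-sandwich steps are essentially mechanical.
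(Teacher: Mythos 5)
Your proposal is correct and follows essentially the same route as the original proof in \cite{nitanda2022convex,chizat2022mean}, to which the paper defers for this auxiliary result: the dissipation identity $\frac{\rd}{\rd t}\cL(\mu_t) = -\lambda^2 \FI(\mu_t\|\hat{\mu}_t)$, the uniform LSI applied to the proximal Gibbs distribution, the entropy sandwich $\cL(\mu_t)-\cL(\mu_*)\leq \lambda\KL(\mu_t\|\hat{\mu}_t)$, and Gr\"onwall. The same structure also underlies the paper's own discrete-time argument (the one-step interpolation in the proof of Theorem \ref{theorem:discrete_mfld}), so no further comparison is needed.
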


For MFLD \eqref{eq:discrete_infinite_mfld}, we consider one-step interpolation: for $0 \leq t \leq \eta$,
\[ X_{k,t} = X_k - t \nabla \frac{\delta F(\mu_k)}{\delta \mu}(X_k) + \sqrt{2\lambda t}\xi_k. \]
Set $\mu_{k,t} = \mathrm{Law}(X_{k,t})$ and 
$\delta_{\mu_k,t}= \bE_{(X_k,X_{k,t})}\left[ \left\| \nabla \frac{\delta F(\mu_k)}{\delta \mu}(X_k) - \nabla \frac{\delta F(\mu_{k,t})}{\delta \mu}(X_{k,t})\right\|_2^2\right]$.
\begin{theorem}[\cite{nitanda2022convex}]\label{theorem:discrete-time}
Under Assumptions \ref{assumption:regularity} and \ref{assumption:uniform_ls_ineq}, we run the infinite-particle MFLD \eqref{eq:discrete_infinite_mfld} in the discrete-time setting with the step size $\eta$.
Suppose there exists a constant $\delta_\eta$ such that $\delta_{\mu_{k},t} \leq \delta_\eta$ for any $0 <  t \leq \eta$.
Then, it follows that 
\begin{align*}
\cL(\mu_{k}) - \cL(\mu_*)  
\leq 
\frac{ \delta_{\eta}}{2 \alpha \lambda} 
+ \exp( -\alpha\lambda\eta k )\left( \cL(\mu_{0}) - \cL(\mu_*) \right). 
\end{align*}
\end{theorem}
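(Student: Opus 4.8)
The plan is to prove this by the one-step interpolation argument of \cite{vempala2019rapid} adapted to the mean-field setting; it is precisely the infinite-particle counterpart of the proof of Theorem~\ref{theorem:discrete_mfld} carried out in Appendix~\ref{subsec:mfld_conv}, and the discrete-time analog of Theorem~\ref{theorem:continuous-time}. Within the $k$-th step I would freeze the drift and analyze the interpolation $X_{k,t} = X_k - t\nabla\frac{\delta F(\mu_k)}{\delta \mu}(X_k) + \sqrt{2\lambda t}\,\xi_k$, so that $\mu_{k,0}=\mu_k$ and $\mu_{k,\eta}=\mu_{k+1}$. The central object is the time-derivative of $\cL(\mu_{k,t})$ along this interpolation, which I will show decreases up to a discretization error controlled by $\delta_\eta$.

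First I would write the continuity (Fokker--Planck) equation for $\mu_{k,t}=\mathrm{Law}(X_{k,t})$. Because the drift is frozen at the starting point $X_k$, it reads $\partial_t\mu_{k,t} = \nabla\cdot(\mu_{k,t}\,\bE[\nabla\frac{\delta F(\mu_k)}{\delta \mu}(X_k)\mid X_{k,t}=\cdot]) + \lambda\Delta\mu_{k,t}$. Adding and subtracting the correct mean-field drift $\nabla\frac{\delta F(\mu_{k,t})}{\delta \mu}$ and using the proximal Gibbs identity $\nabla\frac{\delta F(\mu_{k,t})}{\delta \mu} + \lambda\nabla\log\mu_{k,t} = \lambda\nabla\log\frac{\mu_{k,t}}{\hat{\mu}_{k,t}}$ (where $\hat{\mu}_{k,t}$ is the proximal Gibbs distribution of $\mu_{k,t}$) splits this into a dissipative part aligned with the continuous-time MFLD and an error term carrying the gradient mismatch. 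Differentiating $\cL(\mu_{k,t})$, integrating by parts, and invoking $\nabla\frac{\delta\cL}{\delta\mu}(\mu_{k,t}) = \lambda\nabla\log\frac{\mu_{k,t}}{\hat{\mu}_{k,t}}$, I obtain a leading term $-\lambda^2\bE_{\mu_{k,t}}[\|\nabla\log\frac{\mu_{k,t}}{\hat{\mu}_{k,t}}\|_2^2]$ plus a cross term pairing $\lambda\nabla\log\frac{\mu_{k,t}}{\hat{\mu}_{k,t}}(X_{k,t})$ against $\nabla\frac{\delta F(\mu_k)}{\delta \mu}(X_k) - \nabla\frac{\delta F(\mu_{k,t})}{\delta \mu}(X_{k,t})$, where the tower property over the joint law of $(X_k, X_{k,t})$ removes the conditional expectation.

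Next I would apply Young's inequality to the cross term, spending half of the Fisher-information dissipation and leaving the remainder $\tfrac12\delta_{\mu_k,t}\leq\tfrac12\delta_\eta$ by hypothesis, yielding $\frac{\rd}{\rd t}\cL(\mu_{k,t}) \leq -\tfrac{\lambda^2}{2}\bE_{\mu_{k,t}}[\|\nabla\log\frac{\mu_{k,t}}{\hat{\mu}_{k,t}}\|_2^2] + \tfrac12\delta_\eta$. I then chain the log-Sobolev inequality (Assumption~\ref{assumption:uniform_ls_ineq}) with the entropy-sandwich upper bound $\cL(\mu_{k,t})-\cL(\mu_*)\leq\lambda\KL(\mu_{k,t}\|\hat{\mu}_{k,t})$ of \cite{nitanda2022convex,chizat2022mean} to lower-bound the dissipation by $\alpha\lambda(\cL(\mu_{k,t})-\cL(\mu_*))$, producing the differential inequality $\frac{\rd}{\rd t}(\cL(\mu_{k,t})-\cL(\mu_*)-\frac{\delta_\eta}{2\alpha\lambda}) \leq -\alpha\lambda(\cL(\mu_{k,t})-\cL(\mu_*)-\frac{\delta_\eta}{2\alpha\lambda})$. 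Gronwall over $[0,\eta]$ contracts the shifted gap by $\exp(-\alpha\lambda\eta)$ per step, and iterating over $k$ (using $\exp(-\alpha\lambda\eta k)\cdot(-\frac{\delta_\eta}{2\alpha\lambda})\le 0$) gives the claim.

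I expect the main obstacle to be the careful bookkeeping of the frozen drift: since it is evaluated at $(\mu_k, X_k)$ rather than at the current $(\mu_{k,t}, X_{k,t})$, the clean dissipation identity of continuous-time MFLD holds only up to the mismatch term, and one must use the joint law of $(X_k, X_{k,t})$ and the tower property to express that term exactly as the quantity $\delta_{\mu_k,t}$ bounded in the hypothesis. The reliance on the proximal-Gibbs entropy sandwich—which converts Fisher-information dissipation into a bound on the genuine objective gap $\cL(\mu_{k,t})-\cL(\mu_*)$—is the other ingredient I would import from \cite{nitanda2022convex} rather than reprove.
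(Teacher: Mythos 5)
Your proposal is correct and follows essentially the same route as the source: the paper imports this theorem from \cite{nitanda2022convex} without reproving it, but its own Appendix~\ref{subsec:mfld_conv} proof of Theorem~\ref{theorem:discrete_mfld} is precisely the finite-particle version of the one-step interpolation argument you describe (frozen drift, continuity equation, Young's inequality spending half the Fisher dissipation, LSI, Gronwall), with the proximal Gibbs distribution and entropy sandwich playing the role that $\pow[\mu,N]_*$ plays there. Your bookkeeping also recovers the stated constants, including the loss of the factor $2$ in the exponent relative to Theorem~\ref{theorem:continuous-time}.
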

Under Assumption \ref{assumption:regularity}, we can evaluate $\delta_{\mu_k,t}$ in a similar way as the proof of Theorem \ref{theorem:discrete_mfld} and we obtain
\[ \delta_\eta = 8\eta( M_2^2 + \lambda^{\prime 2}) (2\eta M_1^2 + 2\lambda d) 
    +  32 \eta^2 \lambda'^2( M_2^2 + \lambda^{\prime 2}) \left( \bE\left[ \left\| X_0 \right\|_2^2 \right] + \frac{1}{\lambda'}\left(\frac{M_1^2}{4\lambda'} + \lambda d\right) \right).\]

The next theorem gives a relationship between LSI and Talagrand's inequalities.
\begin{theorem}[\cite{otto2000generalization}]\label{thm:otto-villani}
If a probability distribution $\mu \in \cP_2(\bR^{d})$ satisfies the log-Sobolev inequality with constant $\alpha>0$, 
then $\mu$ satisfies Talagrand's inequality with the same constant: for any $\mu' \in \cP_2(\bR^d)$
\[ \frac{\alpha}{2}W_2^2(\mu',\mu) \leq \KL(\mu' \| \mu), \]
where $W_2(\mu',\mu)$ denotes the $2$-Wasserstein distance
\end{theorem}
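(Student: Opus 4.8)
The plan is to prove this as the classical Otto--Villani theorem via the Hamilton--Jacobi (Hopf--Lax) route, since that delivers the implication LSI $\Rightarrow$ Talagrand for an arbitrary measure satisfying the log-Sobolev inequality without any auxiliary curvature assumption. The central object is the Hopf--Lax semigroup $Q_t g(x) = \inf_{y\in\bR^d}\{ g(y) + \frac{1}{2t}\|x-y\|_2^2 \}$ (with $Q_0 g = g$), which solves the Hamilton--Jacobi equation $\partial_t Q_t g = -\frac12\|\nabla Q_t g\|_2^2$ almost everywhere and satisfies $\frac{1}{\alpha t}\log\int e^{\alpha t\, Q_t g}\,\rd\mu \to \int g\,\rd\mu$ as $t\to 0^+$. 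For a fixed bounded Lipschitz $g$, I would study the one-parameter quantity $G(t) = \big(\int e^{\alpha t\, Q_t g}\,\rd\mu\big)^{1/(\alpha t)}$, in which the exponent $p(t)=\alpha t$ and the Hamilton--Jacobi time $t$ are deliberately tied together through the LSI constant $\alpha$.

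The key computational step is to show $G$ is non-increasing on $(0,1]$. Differentiating $\log G$ and substituting $\partial_t Q_t g = -\frac12\|\nabla Q_t g\|_2^2$, the derivative splits into a term $p'(t)\big(\int p v e^{pv}\,\rd\mu - H\log H\big)$, with $v=Q_t g$, $p=\alpha t$, $H=\int e^{pv}\,\rd\mu$, plus a negative gradient term $-\frac{p^2}{2}\int\|\nabla v\|_2^2 e^{pv}\,\rd\mu$. Applying the log-Sobolev inequality with test function $f = e^{pv/2}$ bounds the first bracket by $\frac{p^2}{2\alpha}\int\|\nabla v\|_2^2 e^{pv}\,\rd\mu$; the precise choice $p'(t)=\alpha$ then makes the two gradient contributions cancel, yielding $(\log G)'\le 0$. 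Passing to the limit $t\to 0^+$ (where $\log G \to \int g\,\rd\mu$) and evaluating at $t=1$ gives the infimum-convolution (Hamilton--Jacobi) inequality $\int e^{\alpha Q_1 g}\,\rd\mu \le \exp\big(\alpha\int g\,\rd\mu\big)$.

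Finally I would convert this inequality into the transport bound. For any $\mu'\in\cP_2(\bR^d)$ with $\mu'\ll\mu$, the Gibbs variational principle for relative entropy gives $\alpha\int Q_1 g\,\rd\mu' \le \KL(\mu'\|\mu) + \log\int e^{\alpha Q_1 g}\,\rd\mu \le \KL(\mu'\|\mu) + \alpha\int g\,\rd\mu$, hence $\alpha\big(\int Q_1 g\,\rd\mu' - \int g\,\rd\mu\big)\le \KL(\mu'\|\mu)$. Taking the supremum over $g$ and invoking Kantorovich duality for the quadratic cost, under which $\sup_g\{\int Q_1 g\,\rd\mu' - \int g\,\rd\mu\} = \frac12 W_2^2(\mu',\mu)$, produces $\frac{\alpha}{2}W_2^2(\mu',\mu)\le\KL(\mu'\|\mu)$, which is the claim. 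I expect the main obstacle to be the analytic bookkeeping rather than the idea: rigorously justifying differentiation under the integral sign for $G(t)$, the almost-everywhere Hamilton--Jacobi equation together with the associated integration by parts, the $t\to 0^+$ limit, and the reduction to bounded Lipschitz $g$ (followed by approximation) so that both the LSI application and the Kantorovich duality are valid on the relevant class of test functions.
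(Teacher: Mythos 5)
Your proposal is mathematically sound, but note that the paper does not actually prove this statement: Theorem~\ref{thm:otto-villani} is imported as a black box from \cite{otto2000generalization}, so there is no internal proof to match. Relative to the cited source, your route is genuinely different. Otto and Villani's original argument runs through the Langevin/heat semigroup: one interpolates between $\mu'$ and $\mu$ along the flow, controls $\frac{\rd}{\rd t}W_2$ by the square root of the relative Fisher information, and uses the LSI to close an HWI-type estimate. You instead follow the Hamilton--Jacobi (Hopf--Lax) method of Bobkov, Gentil, and Ledoux: the monotonicity of $G(t)=\bigl(\int e^{\alpha t\,Q_tg}\,\rd\mu\bigr)^{1/(\alpha t)}$ is exactly their hypercontractivity statement for the Hopf--Lax semigroup, and your constants are consistent with the paper's normalization of the LSI --- taking $f=e^{pv/2}$ in $\Ent_\mu(f^2)\le\frac{2}{\alpha}\int\|\nabla f\|_2^2\,\rd\mu$ gives $\Ent_\mu(e^{pv})\le\frac{p^2}{2\alpha}\int\|\nabla v\|_2^2e^{pv}\,\rd\mu$, so the choice $p'(t)=\alpha$ indeed cancels the gradient term, and the endpoint inequality $\int e^{\alpha Q_1g}\,\rd\mu\le\exp\bigl(\alpha\int g\,\rd\mu\bigr)$ combined with the Donsker--Varadhan variational formula and Kantorovich duality (for which bounded Lipschitz test functions suffice when $\mu,\mu'\in\cP_2(\bR^d)$, the case $\mu'\not\ll\mu$ being trivial since then $\KL(\mu'\|\mu)=\infty$) yields $\frac{\alpha}{2}W_2^2(\mu',\mu)\le\KL(\mu'\|\mu)$. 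What each approach buys: the semigroup proof extends naturally to HWI inequalities and quantitative refinements, while your Hamilton--Jacobi proof avoids any discussion of Wasserstein gradient flows and dimension-dependent regularity of the flow, at the cost of the analytic bookkeeping you correctly flag (a.e.\ validity of $\partial_tQ_tg=-\frac12\|\nabla Q_tg\|_2^2$, differentiation under the integral, and the $t\to0^+$ limit), all of which is standard for bounded Lipschitz $g$.
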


\end{document}